%% file: main_arxiv.tex
\documentclass[11pt,letterpaper]{article}

\usepackage[utf8]{inputenc} % allow utf-8 input
\usepackage[T1]{fontenc}    % use 8-bit T1 fonts
\usepackage{hyperref}       
\usepackage{url}            % simple URL typesetting
\usepackage{booktabs}       % professional-quality tables
\usepackage{xcolor}
\usepackage{amsfonts}       % blackboard math symbols
\usepackage{nicefrac}       % compact symbols for 1/2, etc.
\usepackage{microtype}      % microtypography
\usepackage{appendix}
\usepackage{amsthm}
\usepackage{amsmath}
\usepackage{amssymb}
\usepackage{enumerate}
\usepackage{algorithm}
\usepackage{algpseudocode}
\usepackage{geometry}
\usepackage{setspace}
\usepackage{xspace}
\usepackage{mathabx}
\usepackage{graphicx}
\usepackage{times}
\usepackage{tabularx, makecell, booktabs, array, longtable}
\usepackage{natbib}
\usepackage{verbatim}

\newcommand{\Prob}{\mathbb P}
\newcommand{\op}{\mathrm{op}}

\newcommand{\normop}[1]{\left\|#1\right\|_{\op}}
\newcommand{\Sc}{\widehat\Sigma}

\newcommand{\Ed}{\mathcal E_d}
\newcommand{\Ct}{\mathcal C_\ell}
\newcommand{\ot}{\widetilde O}

\DeclareMathOperator{\Lap}{Lap}
\DeclareMathOperator{\Bern}{Bernoulli}
\input{math_commands}

\input{_includes}

\makeatletter
\newif\ifhf@appendixtoc
\newcommand{\hf@appendixtocline}[2]{%
  \addtocontents{atoc}{\protect\contentsline{#1}{#2}{\thepage}{\@currentHref}\protected@file@percent}%
}
\newcommand{\hf@maybeappendixtocline}[2]{%
  \def\hf@entrytype{#1}%
  \def\hf@sectiontype{section}%
  \def\hf@subsectiontype{subsection}%
  \def\hf@subsubsectiontype{subsubsection}%
  \ifx\hf@entrytype\hf@sectiontype
    \hf@appendixtocline{#1}{#2}%
  \else\ifx\hf@entrytype\hf@subsectiontype
    \hf@appendixtocline{#1}{#2}%
  \else\ifx\hf@entrytype\hf@subsubsectiontype
    \hf@appendixtocline{#1}{#2}%
  \fi\fi\fi
}
\let\hf@oldaddcontentsline\addcontentsline
\renewcommand{\addcontentsline}[3]{%
  \hf@oldaddcontentsline{#1}{#2}{#3}%
  \ifhf@appendixtoc
    \def\hf@contentsfile{#1}%
    \def\hf@tocfile{toc}%
    \ifx\hf@contentsfile\hf@tocfile
      \hf@maybeappendixtocline{#2}{#3}%
    \fi
  \fi
}

\newcommand{\startappendixcontents}{\hf@appendixtoctrue}
\newcommand{\appendixtableofcontents}{%
  \section*{\contentsname}%
  \@starttoc{atoc}%
}
\makeatother

\title{ Improved  Private Sparse Covariance\\ Estimation with Multiscale Threshold Tests}
\author{Zihan Zhang \\ Department of CSE, HKUST \\ zihanz@cse.ust.hk}
\date{}

\begin{document}
\maketitle
\thispagestyle{empty}

\input{abstract}
%\clearpage
%\setcounter{page}{1}

\input{intro}

\input{pre}

\input{alg}
\input{pf}

\input{dis}

\subsection*{AI use statement}
GPT 6 Astra was used to search for and refine candidate choices of algorithmic parameters. The parameter search included the initial candidate-set size, the clipping thresholds, the size of the selected candidate set \(J\), and the per-scale repetition counts \(m_\ell\).  The authors take full responsibility for the final manuscript, including the mathematical claims and proofs, the algorithm and parameter choices, and the accuracy and originality of the text and references.

\bibliography{references}
\bibliographystyle{apalike}

\newpage
\appendix
\input{appendix.tex}

\end{document}

%% file: math_commands.tex
\usepackage{amsmath,amsfonts,bm}

\def\eqref#1{equation~\ref{#1}}
\def\1{\bm{1}}

\DeclareMathAlphabet{\mathsfit}{\encodingdefault}{\sfdefault}{m}{sl}
\SetMathAlphabet{\mathsfit}{bold}{\encodingdefault}{\sfdefault}{bx}{n}

%% file: _includes.tex
\usepackage{amsthm, amsmath, amssymb, mathabx}
\allowdisplaybreaks % align split page
\usepackage{graphicx}
\usepackage{enumerate}
\usepackage{pifont}
\usepackage{booktabs}
\usepackage{multirow}
\usepackage{multicol}
\usepackage{stfloats}
\usepackage{xspace}
\usepackage{thmtools}
\usepackage{thm-restate}
\usepackage{hyperref}
\usepackage{cleveref}
\crefname{appendix}{appendix}{appendices}
\Crefname{appendix}{Appendix}{Appendices}
\crefname{subappendix}{appendix}{appendices}
\Crefname{subappendix}{Appendix}{Appendices}
\crefname{subsubappendix}{appendix}{appendices}
\Crefname{subsubappendix}{Appendix}{Appendices}
\makeatletter
\let\hf@oldappendix\appendix
\renewcommand{\appendix}{%
  \hf@oldappendix
  \crefalias{section}{appendix}%
  \crefalias{subsection}{subappendix}%
  \crefalias{subsubsection}{subsubappendix}%
}
\makeatother
\crefname{ALC@unique}{line}{lines}
\Crefname{ALC@unique}{Line}{Lines}
\usepackage{anyfontsize} % customize font sizes
\usepackage{makecell}
\usepackage{hhline}
\usepackage{colortbl}
\usepackage{xpatch}
\usepackage{scalerel,stackengine}
\usepackage{sidecap}

\makeatletter
\def\@fnsymbol#1{\ensuremath{\ifcase#1\or *\or \dagger\or \ddagger\or
  \mathsection\or \mathparagraph\or \|\or \diamond \or **\or \dagger\dagger
  \or \ddagger\ddagger \else\@ctrerr\fi}}
\makeatother

\makeatletter
\newcommand{\printfnsymbol}[1]{%
  \textsuperscript{\@fnsymbol{#1}}%
}
\makeatother

\newtheorem{theorem}{Theorem}
\newtheorem{lemma}{Lemma}

\newtheorem{definition}{Definition}

\theoremstyle{definition}

\newcommand{\Roma}[1]{\uppercase\expandafter{\romannumeral#1}}

\usepackage[colorinlistoftodos, textwidth=18mm]{todonotes}

\def\shownotes{1}
\ifnum\shownotes=0
\newcommand{\todorz}[1]{}
\newcommand{\todorzout}[1]{}
\newcommand{\todossdout}[1]{}
\newcommand{\todossd}[1]{}
\else
\newcommand{\todorz}[1]{\todo[color=blue!10, inline]{\small RZ: #1}}
\newcommand{\todorzout}[1]{\todo[color=blue!10]{\scriptsize RZ: #1}}
\newcommand{\todossdout}[1]{\todo[color=red!10]{\scriptsize SSD: #1}}
\newcommand{\todossd}[1]{\todo[color=red!10, inline]{\small SSD: #1}}
\fi

%% file: abstract.tex
\begin{abstract}
We study differentially private covariance estimation in operator norm for mean-zero sub-Gaussian distributions with unknown covariance support and at most $k$ nonzero entries per row. We develop a multiscale random-threshold algorithm with sample complexity $\ot(k^2/\alpha^2+k\sqrt d/(\alpha\varepsilon))$ for $(\varepsilon,\delta)$-differential privacy and error at most $\alpha\sigma^2$, where $d$ is the dimension and $\sigma$ is a known sub-Gaussian scale. The bound improves the privacy-dependent term of the existing $\ot(k^2/\alpha^2+k^{3/2}\sqrt d/(\alpha\varepsilon))$ \citep{kumar2026curse} upper bound by a factor of $\sqrt k$, and matches the lower bound of $\widetilde{\Omega}(k^2/\alpha^2 + k\sqrt{d}/(\alpha\varepsilon))$ in its applicable parameter regime. 

Our key technical ingredient is a direct operator-norm bound on the
centered fluctuations of an ideal reconstruction, exploiting conditional
independence rather than accumulating entrywise errors across each row.
A multiscale allocation of threshold tests balances reconstruction
variance against query sensitivity. 
Together, these ingredients sharpen the trade-off between approximation
error and privacy protection, removing the additional $\sqrt{k}$ factor
from the privacy-dependent sample complexity.
\end{abstract}

%% file: intro.tex
\section{Introduction}\label{sec:intro}
Covariance estimation is a basic task in multivariate statistics and
provides the spectral information used by methods such as principal
component analysis. Its difficulty changes substantially when the
ambient dimension is comparable to or larger than the number of
observations. In this regime, the empirical covariance matrix can be
inaccurate in operator norm even when its individual entries are well
estimated. Structural assumptions are therefore essential for obtaining
useful guarantees from limited data. Covariance sparsity provides one
such assumption: each coordinate has nonzero covariance with only a
small number of other coordinates, although individual observations may
still be dense \citep{bickel2008thresholding,cai2012optimal}.

A substantial non-private literature shows how to exploit this
structure. \citet{bickel2008thresholding} establish operator-norm guarantees
for thresholding the empirical covariance, while
\citet{cai2011adaptive} develop entry-dependent thresholds that adapt to
the variability of individual covariance estimates.
\citet{cai2012optimal} complement these algorithmic developments with
minimax characterizations over sparse covariance classes. These results
demonstrate that suitable regularization can replace polynomial
dependence on the ambient dimension with dependence primarily on
sparsity and logarithmic dimension factors. They also highlight the
importance of the loss function: accurate estimation of a covariance
matrix in operator norm requires controlling the aggregate effect of
errors across its entries, rather than treating those entries as
unrelated scalar estimation problems.

Privacy introduces a different constraint. Differential privacy requires
the output distribution to remain stable when a single observation is
replaced, and this requirement applies to arbitrary neighboring
datasets, not only datasets that are typical under a statistical model
\citep{dwork2014algorithmic}. Consequently, sparsity of the population
covariance does not directly yield a sparse or low-sensitivity empirical
statistic. Even a distribution with diagonal covariance can generate
dense sample vectors and dense outer products. For example, if
$X\sim\mathcal N(0,I_d)$, then
$\mathbb E\|X\|_2^2=d$, despite the covariance having only one nonzero
entry per row. Thus, bounds based on uniformly small sample norms must
be interpreted carefully when assessing dimensional dependence.
Private thresholding methods provide an initial way to combine
sparsity with privacy \citep{wang2021differentially}, but determining the
sample cost of privacy under a natural statistical normalization
requires a more precise understanding of this distinction.

We consider covariance estimation from $n$ independent and identically
distributed observations drawn from a mean-zero, $\sigma$-sub-Gaussian
distribution on $\mathbb R^d$, where the scale $\sigma>0$ is known.
The population covariance $\Sigma$ is $k$-row-column sparse: each row
and column contains at most $k$ nonzero entries, with diagonal entries
counted toward this bound. Its support is unknown, and we impose
neither an eigengap condition nor a sparsity assumption on its
eigenvectors. Our goal is to construct an
$(\varepsilon,\delta)$-differentially private estimator
$\widehat\Sigma$ that approximates $\Sigma$ in operator norm with
high probability:
\begin{equation}
	\label{eq:intro-accuracy}
	\Pr\!\left(
	\|\widehat\Sigma-\Sigma\|_{\mathrm{op}}
	\le \alpha\sigma^2
	\right)
	\ge 1-\beta,
\end{equation}
where the probability is taken jointly over the observations and the
internal randomness of the estimator.
 In the
non-private case, thresholding yields the benchmark sample complexity
$\widetilde{O}(k^2/\alpha^2)$\footnote{ Throughout, the tilde notation suppresses
logarithmic factors in the problem parameters.}
\citep{bickel2008thresholding}.
\citet[Theorem~1]{kumar2026curse} obtain the private upper bound $
	\widetilde O\!\left(
	k^2/\alpha^2
	+k^{3/2}\sqrt d/(\alpha\varepsilon)
	\right).$
Their Theorem~2 gives the lower bound $
	\widetilde\Omega\!\left(
	k^2/\alpha^2
	+k\sqrt d/(\alpha\varepsilon)
	\right)$
when $  \log (d/\alpha)\lesssim k \lesssim d^{1/2-\lambda}$\footnote{The non-private lower bound holds for $\frac{k^2 \log d}{\alpha^2} \lesssim c_\zeta d^{1/\zeta}$ for any $\zeta>1$} and $\delta  \lesssim\ \varepsilon^2/d^2$.
These bounds identify a $\sqrt d$ privacy cost in that regime, but
leave a $\sqrt k$ gap in the privacy term. This raises a natural question: \emph{Is it possible to remove the additional sparsity factor
	$\sqrt{k}$ from the privacy term and thereby match the lower bound
	up to logarithmic factors?}
	
\paragraph{Our contribution. }
We answer this question affirmatively by developing a multiscale
randomized-threshold mechanism that achieves
\eqref{eq:intro-accuracy} using
\begin{equation}
	\label{eq:intro-result}
	\widetilde O\!\left(
	\frac{k^2}{\alpha^2}
	+\frac{k\sqrt d}{\alpha\varepsilon}
	\right)
\end{equation}
samples. The mechanism satisfies
$(\varepsilon,\delta)$-differential privacy under replace-one
adjacency for every input dataset.
Compared with \citep{kumar2026curse}, our bound preserves the
non-private sample complexity term and improves the privacy term
by a factor of $\sqrt{k}$, without imposing additional structural
assumptions. Consequently, it closes the $\sqrt{k}$ gap between
the upper and lower bounds of \citet{kumar2026curse}, up to
logarithmic factors when $  \log (d/\alpha)\lesssim k \lesssim d^{1/2-\lambda}$ for some constant $\lambda>0$, and $\delta \lesssim \varepsilon^2/d^2$.

\paragraph{Technical ingredients.} Let $S_{\mathrm{clip}}$ denote the empirical covariance matrix
computed from clipped observations, and let $\widehat\Sigma$
denote the private estimator.
Our key technical observation is that the randomized reconstruction,
rather than $S_{\mathrm{clip}}$ itself, inherits the row sparsity
of the population covariance on a high-probability empirical
concentration event.
Specifically, conditional on $S_{\mathrm{clip}}$, the ideal
estimator $\Sigma^{\mathrm{id}}$ is a sum of independent random
matrices whose contributions vanish outside the support of $\Sigma$.
Writing
$\overline\Sigma(S_{\mathrm{clip}})
:=\mathbb E[\Sigma^{\mathrm{id}}\mid S_{\mathrm{clip}}]$,
we apply matrix concentration directly to
$\|\Sigma^{\mathrm{id}}-\overline\Sigma(S_{\mathrm{clip}})\|_{\mathrm{op}}$,
rather than first bounding the entrywise fluctuations and then
using
$k\|\Sigma^{\mathrm{id}}-\overline\Sigma(S_{\mathrm{clip}})\|_{\max}$.
This allows us to exploit the centered, independent structure of
the random contributions instead of accumulating their absolute
errors across each row.
We separately control the error of the conditional mean,
$\|\overline\Sigma(S_{\mathrm{clip}})-\Sigma\|_{\mathrm{op}}$,
using covariance sparsity, and transfer the resulting accuracy
guarantee to $\widehat\Sigma$ through a high-probability coupling.
		
Building on this observation, we design our mechanism around
three key ingredients.
\begin{itemize}
	\item[(i)] \textbf{A noise-corrected randomized representation.}
	Rather than releasing covariance entries directly, we construct a
	multiscale, randomly weighted representation of the clipped empirical
	covariance. 
	A noise-corrected scalar integral identity gives the corresponding
	ideal estimator a small truncation bias. Crucially, reweighting
	compensates in expectation for randomly omitted contributions,
	without requiring exact support recovery.
	
	\item[(ii)] \textbf{Scale-dependent allocation and direct matrix
		concentration.}
	We allocate the expected number of active tests per position
	quadratically in the threshold scale, while both normalized-query
	sensitivity and output weights decrease inversely with that scale.
	This balances the squared-sensitivity and second-moment terms across
	scales. Conditional on the empirical matrix, the ideal estimator is
	a sum of independent random matrices, allowing us to control its
	fluctuations directly in operator norm using matrix Bernstein
	\citep{tropp2012user}, rather than accumulating entrywise
	random-error bounds across each row. 
	
	\item[(iii)] \textbf{Private quotas and high-probability coupling.}
	To ensure privacy on arbitrary datasets, we enforce public,
	scale-dependent quotas through private oneshot top-$k$ selection
	\citep{qiao2021oneshot}. These quotas provide a uniform sensitivity
	bound for the subsequent query vector for every fixed selected
	label set. Under the statistical model, we prove that selection
	retains every potentially nonzero contribution with high probability.
	Using the same threshold tests and gating noise, the actual output
	then coincides exactly with the ideal estimator on this event.
	We can therefore transfer the ideal estimator's accuracy without
	conditioning its concentration argument on successful selection,
	while establishing privacy separately for all adjacent datasets.
\end{itemize}

\paragraph{Related work.}
Research on covariance estimation spans several complementary directions.
Structural regularization and minimax theory characterize the benefits of
covariance structure \citep{cai2012optimal,cai2016estimating}.
Private matrix release studies privacy--accuracy trade-offs for bounded
records and data-dependent error guarantees
\citep{amin2019differentially,dong2022differentially}.
Private distribution learning and stable estimation address Gaussian and
sub-Gaussian mean and covariance estimation through preconditioning and
affine-invariant procedures
\citep{kamath2019privately,brown2023fast}.
Dedicated work on private sparse covariance estimation combines sparsity
with privacy constraints \citep{wang2021differentially}.
These directions differ in their structural assumptions, input
normalizations, privacy definitions, and estimation losses.
Appendix~\ref{app:rel} provides a broader discussion.

%% file: pre.tex
\section{Preliminaries}\label{sec:model}
\paragraph{Sparse covariance matrix.}
We study the sparse covariance estimation problem of
\citet[Model~1 and Problem~1]{kumar2026curse},
without an eigengap assumption.
Let $D=\{X_1,\ldots,X_n\}$ consist of independent and
identically distributed samples from an unknown mean-zero
distribution $P$ on $\mathbb R^d$.
We assume that $P$ is $\sigma$-sub-Gaussian: for a known
scale $\sigma>0$,
\begin{equation}
	\label{eq:model}
	\mathbb E_{X\sim P}\!\left[\exp(u^\top X)\right]
	\le
	\exp\!\left(\frac{\sigma^2\|u\|_2^2}{2}\right)
	\qquad
	\text{for every }u\in\mathbb R^d.
\end{equation}
The population covariance
$\Sigma=\mathbb E_{X\sim P}[XX^\top]$ is
\emph{$k$-row-column sparse}: each row and each column
contains at most $k$ nonzero entries, with nonzero
diagonal entries counted toward this bound.
Since $\Sigma$ is symmetric, the row and column
sparsity conditions are equivalent.
The support of $\Sigma$ is unknown to the learner.

The sub-Gaussian condition implies
$v^\top\Sigma v=\mathbb E[(v^\top X)^2]
\le \sigma^2\|v\|_2^2$ for every $v\in\mathbb R^d$,
and hence $0\preceq\Sigma\preceq\sigma^2 I_d$.
Importantly, sparsity is imposed only on the population
covariance; the observations, their outer products, and
the empirical covariance matrix need not be sparse.

\begin{definition}[Differential privacy]
	\label{def:dp}
	Let $\mathcal X$ be the data domain, and let
	$\mathcal Y$ be a measurable output space. 
	Two datasets $D=\{X_1,\ldots,X_n\}$ and
	$D'=\{X_1',\ldots,X_n'\}$ in $\mathcal X^n$ are adjacent
	under replace-one adjacency, denoted $D\sim D'$, if they
	differ in at most one sample.
	For $\varepsilon\ge 0$ and $\delta\in[0,1)$, a randomized
	mechanism $\mathcal M:\mathcal X^n\to\mathcal Y$ satisfies
	$(\varepsilon,\delta)$-differential privacy if, for every
	adjacent pair $D\sim D'$ and every measurable event
	$E\subset  \mathcal{Y}$,
	\[
	\Pr\!\left(\mathcal M(D)\in E\right)
	\le
	e^\varepsilon
	\Pr\!\left(\mathcal M(D')\in E\right)
	+\delta,
	\]
	where the probabilities are taken over the internal
	randomness of $\mathcal M$, with $D$ and $D'$ held fixed.
\end{definition}

\paragraph{Private estimation objective.} Let $\|\cdot \|_{\mathrm{op}}$ denote the operator norm, i.e.,
	 $\|A\|_{\mathrm{op}} = \max_{\|x\|_2 \leq 1}\|Ax\|_2$.  
The goal is to design a randomized mechanism
$\mathcal M:(\mathbb R^d)^n\to\mathbb S^d$,
where $\mathbb S^d$ denotes the space of real symmetric
$d\times d$ matrices, that satisfies
$(\varepsilon,\delta)$-differential privacy under
replace-one adjacency (Definition~\ref{def:dp}) and
returns $\widehat\Sigma=\mathcal M(D)$ satisfying
\begin{equation}
	\label{eq:accuracy}
	\Pr\!\left(
	\|\widehat\Sigma-\Sigma\|_{\mathrm{op}}
	\le \alpha\sigma^2
	\right)
	\ge 1-\beta
\end{equation}
for every distribution $P$ obeying the assumptions above.
Here the probability is taken jointly over the sampled
dataset and the internal randomness of $\mathcal M$.
The distributional assumptions are required only for
accuracy: the privacy guarantee must hold for every pair
of adjacent datasets in $(\mathbb R^d)^n$.
We seek to minimize the sample size $n$ needed to meet
these privacy and accuracy requirements.

\paragraph{Additional notations.}
Let $
\mathcal E_d
:=\{(i,j):1\le i\le j\le d\}$.
Thus, $\mathcal E_d$ indexes the upper-triangular entries,
including the diagonal.
For the standard basis vectors $e_1,\ldots,e_d$, define
\[
B_{ii}:=e_i e_i^\top,
\qquad
B_{ij}:=e_i e_j^\top+e_j e_i^\top
\quad (i<j).
\]
For $e=(i,j)\in\mathcal E_d$, write 
$B_e:=B_{ij}$. Then $\{B_e\}_{e\in\mathcal E_d}$ forms a basis for the space
$\mathbb{S}^d$ of real symmetric $d\times d$ matrices.
%Every symmetric matrix $Q\in\mathbb S^d$ then admitsthe representation $ Q=\sum_{e\in\mathcal E_d}Q_e B_e.$Each off-diagonal basis matrix accounts for both symmetric entries, whereas a diagonal basis matrix accounts for a single entry.

We write
$\|Q\|_{\max}:=\max_{i,j}|Q_{ij}|$
for the entrywise maximum norm.
For any symmetric matrix $Q$ with at most $k$ nonzero
entries in each row, we have that  $
	\|Q\|_{\mathrm{op}}
	\le
	\max_i\sum_j |Q_{ij}|
	\le
	k\|Q\|_{\max}$.

Throughout, we assume $
d\ge 2$, $
1\le k\le d$, $
0<\alpha\le \frac14$, 
$0<\varepsilon\le 1$ and $
0<\delta,\beta\le \frac1{10}$. All logarithms are natural unless marked $\log_2$.   We use $\ot(\cdot)$ to hide the logarithmic factors of $(d,k,1/\varepsilon, 1/\alpha, 1/\beta,1/\delta)$.
We use $[N]$ to denote the set $\{1,2,\ldots, N\}$ for an integer $N$.  
For a threshold $z\ge 0$, define the scalar clipping function by $
\mathrm{Clip}_z(x)
:=\max\{\min\{x,z\},-z\},$ for $x\in\mathbb R.$
For a vector $X\in\mathbb R^d$, we apply clipping coordinatewise: $
[\mathrm{Clip}_z(X)](j)
:=\mathrm{Clip}_z(X(j)),  j=1,\ldots,d.$ For $b>0$, let $\operatorname{Lap}(b)$ denote the
zero-mean Laplace distribution with scale parameter $b$
and probability density $
f_b(x)=\frac{1}{2b}\exp\!\left(-\frac{|x|}{b}\right)$ for $
x\in\mathbb R.$
We write $\mathcal N(\mu,\sigma^2)$ for the uni-variate
Gaussian distribution with mean $\mu\in\mathbb R$ and
variance $\sigma^2>0$.

%% file: alg.tex
\section{The multiscale threshold mechanism}\label{sec:method}

In this section, we present our private covariance estimation mechanism, summarized in Algorithm~\ref{alg:main}, and introduce its parameters $L$, $R$, $r$, and ${p_{\ell}, t_{\ell}, b_{\ell}, s_{\ell}}_{\ell=0}^{L}$.

As in \citet[Algorithm~1]{kumar2026curse}, the mechanism consists
of private selection followed by reconstruction using fresh,
independent noise.
Their method performs row-wise private top-$k$ selection and
constructs a symmetric estimate from independently perturbed
entry values, with the diagonal handled separately.
Our method instead selects a set  $\mathcal J$ of randomized
threshold tests. Each selected, active test produces a value by applying a gate to a noisy normalized query.
This value is then multiplied by a public weight and added
to the corresponding covariance entry.  This design enables direct control of reconstruction fluctuations in operator norm, without  converting an entrywise bound into an operator-norm bound.

\paragraph{Pre-process: clipping step.} Given the dataset $D= \{ X_1,X_2,\ldots, X_n\}$, we first clip $D$ up to a proper threshold.  Define
 $R=\sigma\sqrt{2\log(40nd/\beta)}$. We set $Y_i  = \mathrm{Clip}_R(X_i)$ and  $S(D) = \frac{1}{n} \sum_{i=1}^n Y_i Y_i^{\top}$. When $D$ is clear from the context, we write $S$ for $S(D)$. In the rest of the algorithm, we will use $D_{\mathrm{clip}}=\{ Y_1,Y_2,\ldots, Y_n\}$ to construct the private estimator. The probability that any observation is altered by clipping is
 accounted for in the overall failure probability $\beta$.

\begin{algorithm}[t]
	\caption{Multiscale private covariance estimation}\label{alg:main}
	\begin{algorithmic}[1]
		\Require Dataset $D = \{X_1,X_2,\ldots, X_n\}$; public parameters $d,k,\sigma,\alpha,\varepsilon,\delta,\beta$.
		\State $Y_i \leftarrow \mathrm{Clip}_{R}(X_i)$ for $i = 1,2,\ldots. n$;
		\State Set $S\leftarrow\frac{1}{n}\sum_{i=1}^nY_iY_i^\top$.
		\For{$c = (e,\ell,h)\in \mathcal{C}$}
			\State Draw independent public $U_c \sim \operatorname{Bern}(p_{\ell})$, $T_c\sim \operatorname{Unif}[t_{\ell}, 2t_{\ell}]$.\label{line:pub_ran}
		\EndFor
		\For{$\ell=0,\ldots,L-1$}
		\If{$s_\ell<M_\ell$}
		\For{every $c = (e,\ell, h)\in \mathcal{C}_{\ell}$}
		\State Draw independent $\zeta_c\sim\Lap(b_\ell)$
		\State $q_c \leftarrow ( |S_{e}|/T_c -1) U_c - 2(1-U_c)$
		\EndFor
\State Select $J_\ell \in
\operatorname*{arg\,max}_{\substack{J\subseteq\mathcal C_\ell\\
		|J|=s_\ell}}
\sum_{c\in J}(q_c+\zeta_c)$;
		\Else
		\State $J_\ell\gets\Ct$.
		\EndIf
		\EndFor
		\State $\Sc\gets0_{d\times d}$.
		\For{each $c = (e,\ell, h)\in J = \bigcup_{\ell=0}^{L-1}J_\ell$, in a fixed public order}
		\If{$U_c=1$}
		\State Draw a fresh independent $\check{Z}_c\sim N(0,r^2)$ and set $Z_c  = \mathrm{Clip}_{1/4}(\check{Z}_c)$.
		\State $\Sc\gets\Sc+\frac{1}{\kappa\rho t_{\ell}}\cdot \psi\left(\frac{S_e}{T_c}+Z_c\right)B_{e}$.
		\EndIf
		\EndFor
		\State \Return $\Sc$
	\end{algorithmic}
\end{algorithm}

\paragraph{Stage I: Private candidate selection.}
% H  = \log(80dL/\beta)
Fix two constants $C_s =64$ and $C_{\rho} = 4096$. Define $t_0  =\frac{\alpha\sigma^2}{256k}$ to be the minimal scale, and  $L=\left\lceil\log_2\frac{512k}{\alpha}\right\rceil$ to be total number of scales. Let $t_\ell:=2^\ell t_0$ for
$\ell=0,\ldots,L$. By the choice of $L$, the largest threshold
satisfies $2\sigma^2\le t_L<4\sigma^2$. We further choose $\rho = \frac{C_{\rho}L k\log(80dL/\beta) }{\alpha^2\sigma^4}$,  $\mu_{\ell} = \rho t_{\ell}^2$, $m_{\ell} = \left\lceil \mu_{\ell}\right \rceil$, $p_{\ell} = \mu_{\ell}/m_{\ell}$ and $M_{\ell} = \frac{d(d+1)}{2}\cdot  m_{\ell}$.

We index each candidate threshold test by a tuple $c=(e,\ell,h)$, where
$e\in\mathcal E_d$ specifies the matrix position,
$0\leq \ell \leq L-1$ specifies the scale level, and
$h\in[m_\ell]$ is the repetition index.
Thus, for each position--scale pair $(e,\ell)$, we create
$m_\ell$ candidates, each using independently generated
randomness for activation, threshold generation, and
subsequent private selection.

Formally, the candidate set is $\mathcal{C} =\{(e,\ell,h):e\in\Ed, 0 \leq \ell \leq L-1, 1\le h\le m_\ell\}$.  Let $\mathcal{C}_{\ell}$ denote the candidate set at level $\ell$, i.e., $\mathcal{C}_{\ell} = \{ (e,\ell, h): e\in \Ed, 1\leq h \leq m_{\ell}\}$. By definition, the size of  $\mathcal{C}_{\ell}$ is  $M_{\ell} = \frac{d(d+1)}{2}\cdot m_{\ell}$.

For every candidate $c=(e,\ell,h)$,  the algorithm generates independent public randomness (see Line~\ref{line:pub_ran} Algorithm~\ref{alg:main}) 
$U_c\sim\Bern(p_\ell)$ and $
T_c\sim \operatorname{Unif}[t_{\ell}, 2t_{\ell}]$,
and computes $q_c=(|S_{e}|/T_c -1) U_c - 2(1-U_c)$. 
Here, $U_c$ indicates whether candidate $c$ is active,
$T_c$ is its random threshold, and $q_c$ is its
unperturbed selection score.

Choose $H = \log(80dL/\beta)$,  $ \Delta=2R^2/n$, $ \varepsilon'=\frac{\varepsilon}{8L}$ and $
\delta'=\frac{\delta}{32L}$. Define $s_\ell=\min\!\left\{M_\ell,
\left\lceil C_s\left[d\rho\min\{kt_\ell^2,\sigma^4\}+H\right]\right\rceil\right\}$.
If $s_\ell<M_\ell$, add independent $\zeta_c\sim\Lap(b_\ell)$, where
\begin{equation}\label{eq:laplace}
b_\ell=\frac{8(\Delta/t_\ell)\sqrt{s_\ell\log(M_\ell/\delta')}}{\varepsilon'},
\end{equation}
and set
		\begin{equation}
			J_\ell \in
			\operatorname*{arg\,max}_{\substack{
					J\subseteq\mathcal C_\ell\\
					|J|=s_\ell
			}}
			\sum_{c\in J}(q_c+\zeta_c).
		\end{equation}
In words,  $J_\ell$ denotes the set of candidates with the $s_\ell$ largest noisy scores $q_c+\zeta_c$  among
$\mathcal{C}_{\ell}$\footnote{The independent, continuous Laplace perturbations
	ensure that the noisy scores are distinct almost surely,
	so $J_\ell$ is uniquely determined.
	Any ties arising from finite-precision arithmetic can be
	resolved using a fixed, data-independent ordering of the
	candidate labels.}.
If $s_\ell=M_\ell$, the algorithm directly sets $J_{\ell} =\mathcal{C}_{\ell}$.
In this way, we obtain $J = \cup_{\ell=0}^{L-1} J_{\ell}$ as  the set of selected private candidate tests. 

At each level, the number of active candidates is chosen to balance variance reduction against privacy loss. Distributing the reconstruction weight across more candidates can reduce the estimator’s variance conditional on $S$, but may incur a greater privacy cost during reconstruction. The selection quotas are therefore calibrated to achieve low conditional variance while remaining within the prescribed privacy budget.

\paragraph{Stage II: Private covariance reconstruction.}
Define
\begin{equation}\label{eq:gaussian}
A=\sum_{\ell=0}^{L-1}\frac{s_\ell}{t_\ell^2},\qquad
S_\star=\sum_{\ell=0}^{L-1}s_\ell,\qquad
r=\frac{8\Delta\sqrt A}{\varepsilon}\sqrt{\log\frac{32}{\delta}}.
\end{equation}
 Let $\nu_r$ denote the distribution of $Z = \mathrm{Clip}_{1/4}(\check{Z})$ for $\check{Z}\sim \mathcal{N}(0,r^2)$, and set $
\kappa=\mathbb{E}_{Z\sim\nu_r}\left[\frac1{1-Z}\right]\in \left[ 1,\frac{4}{3}\right]$. 
Define 
$\psi(x)=\textbf{1}\{x>1\}-\textbf{1}\{x<-1\}.$

For each selected candidate $c\in J$, draw a fresh independent $Z_c\sim\nu_r$ and return
\begin{equation}\label{eq:output}
\Sc=\sum_{\ell=0}^{L-1}\sum_{\substack{c=(e,\ell, h)\in J_\ell}}
\frac{\psi(S_{e}/T_c+Z_c)\cdot U_c}{\kappa\rho t_{\ell}}\,B_{e}.
\end{equation}

In this reconstruction, for each $e\in\mathcal{E}_d$, $\hat{\Sigma}_e$ is a noisy weighted combination of the outcomes of active candidate tests across scales. The noises are carefully chosen to ensure that $\hat{\Sigma}$ concentrates around its conditional mean $\mathbb{E}[ \hat{\Sigma}|S]$ within the desired error tolerance.

%% file: pf.tex
\section{Analysis of Approximation Error and Privacy}\label{sec:proof}

We first present the main result under Algorithm~\ref{alg:main}. Set $M=\sum_{\ell=0}^{L-1}M_\ell$ and $\Gamma=\log(80LM/(\beta\delta))$.

\begin{theorem}[Privacy and accuracy]\label{thm:main}
There exists a universal constant $C>0$ such that, with the
public parameters defined in Section~\ref{sec:method}, whenever
\begin{equation}\label{eq:sample}
    n\ge C\!\left[
    \frac{k^2}{\alpha^2}\log\frac{80d}{\beta}
    +\frac{k\sqrt d}{\alpha\varepsilon}
    \log\frac{80nd}{\beta}\,
    L^{3/2}\sqrt H\,\Gamma^{3/2}
    \right],
\end{equation}
Algorithm~\ref{alg:main} is $(\varepsilon,\delta)$-differentially
private on every input dataset $D\in(\mathbb{R}^d)^n$.
Moreover, under the stated distributional assumptions, $
    \Prob\!\left(\normop{\Sc-\Sigma}\le\alpha\sigma^2\right)
    \ge 1-\beta$, 
where the probability is taken over the sampling of $D$ and
all randomness of the algorithm.
\end{theorem}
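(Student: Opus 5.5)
The proof splits into a privacy part, valid for every input dataset, and an accuracy part, valid under the distributional model. For privacy, I would view Algorithm~\ref{alg:main} as an adaptive composition of $L$ selection mechanisms (one per level with $s_\ell<M_\ell$) followed by one Gaussian-type release. The public randomness $(U_c,T_c)$ is independent of the data, so I condition on it throughout.

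\textbf{Selection stage.} Fix a level $\ell$ and let $D\sim D'$. Each clipped vector satisfies $\|Y_i\|_\infty\le R$, so every entry of $S$ moves by at most $\Delta=2R^2/n$. Hence each score $q_c$ changes by at most $\Delta/t_\ell$; inactive candidates do not change at all. Every candidate in $\mathcal C_\ell$ may change, so a plain $\ell_\infty$ sensitivity argument is too weak. Instead I would invoke the oneshot top-$k$ mechanism of \citet{qiao2021oneshot}. With Laplace scale $b_\ell$ as in \eqref{eq:laplace}, it gives $(\varepsilon',\delta')$-DP for releasing the $s_\ell$-subset, provided the score vector has $\ell_\infty$-sensitivity $\Delta/t_\ell$. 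The factor $\sqrt{s_\ell\log(M_\ell/\delta')}$ is exactly their calibration. Composing over $L$ levels, with $\varepsilon'=\varepsilon/(8L)$ and $\delta'=\delta/(32L)$, costs at most $(\varepsilon/2,\delta/2)$.

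\textbf{Reconstruction stage.} Fix the selected set $J$. The released quantities are the gates $\psi(S_e/T_c+Z_c)$ for the active $c\in J$. These are post-processings of the noisy vector $(S_e/T_c+\check Z_c)_{c\in J}$; the clip of $\check Z_c$ to $[-1/4,1/4]$ is absorbed by a coupling. The query $(S_e/T_c)_{c\in J}$ has $\ell_2$-sensitivity at most
\[
\Bigl(\sum_\ell s_\ell\Delta^2/t_\ell^2\Bigr)^{1/2}=\Delta\sqrt A .
\]
The Gaussian mechanism with the scale $r$ of \eqref{eq:gaussian} therefore gives $(\varepsilon/2,\delta/2)$. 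A subtle point is that $J$ is data-dependent while the quota is public. The bound holds for every fixed label set $J$, which is what adaptive composition needs.

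\textbf{Accuracy.} I would work on a good event $\mathcal G$, of probability at least $1-\beta/2$. On $\mathcal G$: no sample is clipped, so $S$ is the empirical covariance; $\|S-\Sigma\|_{\max}\lesssim\sigma^2\sqrt{\log(d/\beta)/n}\le t_0/4$, which uses the first term of \eqref{eq:sample}; and the Laplace noises are bounded. Define the ideal estimator $\Sigma^{\mathrm{id}}$ by using all active candidates in $\mathcal C$, not just those in $J$. This estimator has three key properties.
\begin{enumerate}[(a)]
\item \textbf{Conditional mean.} Given $S$, average over $T_c\sim\mathrm{Unif}[t_\ell,2t_\ell]$, over $Z_c$, and over $U_c$ with mean $p_\ell$ and $m_\ell$ repetitions, so that $m_\ell p_\ell=\rho t_\ell^2$. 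The noise-corrected integral identity should give $\mathbb E[\psi(x/T+Z)]\cdot\kappa^{-1}$ integrated over scales to telescope to $S_e$, up to truncation below $t_0$ and above $t_L\ge 2\sigma^2$. Hence $\overline\Sigma=\mathbb E[\Sigma^{\mathrm{id}}\mid S]$ agrees with $S$ on entries $|S_e|\gtrsim t_0$ and is $O(|S_e|)$-small elsewhere. On $\mathcal G$ every off-support entry has $|S_e|\le t_0/4<t_\ell$, so it produces no mean and no fluctuation. Therefore $\overline\Sigma-\Sigma$ is $k$-sparse per row with max entry $O(t_0+\|S-\Sigma\|_{\max})$, and its operator norm is at most $k\cdot O(\alpha\sigma^2/k)\le\alpha\sigma^2/2$.
\item \textbf{Fluctuation.} $\Sigma^{\mathrm{id}}-\overline\Sigma$ is a sum of independent centered matrices $W_cB_e$ supported on the row-sparse support, with $|W_c|\le 1/(\kappa\rho t_\ell)$. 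Its variance proxy is
\[
\Bigl\|\sum_c\mathbb E W_c^2B_e^2\Bigr\|\lesssim \max_i\sum_{e\ni i}\sum_\ell \frac{m_\ell p_\ell}{\rho^2t_\ell^2}\lesssim \frac{kL}{\rho}.
\]
Matrix Bernstein then gives operator norm at most $\sqrt{kL\log(d/\beta)/\rho}+\log(d/\beta)/(\rho t_0)\le\alpha\sigma^2/4$ by the choice of $\rho$.
\item \textbf{Coupling.} Show $\Sc=\Sigma^{\mathrm{id}}$ with high probability. On-support active candidates at scale $\ell$ number at most $dk\,m_\ell p_\ell\approx dk\rho t_\ell^2$, and at most $d\rho\sigma^4$ in total. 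Chernoff bounds place these counts below $s_\ell/2$, since $s_\ell\ge C_s[d\rho\min\{kt_\ell^2,\sigma^4\}+H]$. Their scores exceed $-1-\max|\zeta|$. Inactive candidates score $-2+\zeta$, and off-support active ones score $\le -3/4+\zeta$. Ties among zero-output candidates are harmless, because a candidate whose gate is $0$ deterministically (on $\mathcal G$, $|S_e|/T_c\le 1/4$ and $|Z_c|\le 1/4$) contributes nothing whether selected or not. So the only requirement is that every candidate with possibly nonzero output is selected. That holds if at most $s_\ell$ candidates can beat it, which needs $b_\ell\Gamma\lesssim 1/4$ relative to score gaps of order $1/2$. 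This is where the second term of \eqref{eq:sample} is used.
\end{enumerate}

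The main obstacle is step (c) together with the quota/noise calibration. One must verify that $b_\ell\Gamma\le$ a small constant, that is, $\Delta\sqrt{s_\ell\log(M/\delta)}\,L\Gamma/(t_\ell\varepsilon)\ll 1$, uniformly in $\ell$, and simultaneously that $r\le$ a small constant so that $\kappa\le 4/3$ and the clipped Gaussian yields a clean identity. Plugging in $s_\ell\approx d\rho k t_\ell^2$ with $\rho\approx Lk\log/(\alpha^2\sigma^4)$ gives $\sqrt{s_\ell}/t_\ell\approx k\sqrt{dL\log}/(\alpha\sigma^2)$, independent of $\ell$. This scale-independence is exactly what produces the $k\sqrt d/(\alpha\varepsilon)$ term, and I would check it carefully for both the small-$t_\ell$ and the saturated ($\sigma^4$) regimes. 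Finally, a union bound over the events in (a)–(c) and the clipping event gives total failure probability $\beta$.
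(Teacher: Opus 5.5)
Your privacy argument and steps (a)--(b) follow the paper's route. That is: oneshot Laplace selection per level with $\ell_\infty$ sensitivity $\Delta/t_\ell$; a Gaussian mechanism on the fixed-label query with $\ell_2$ sensitivity $\Delta\sqrt A$, plus a clipping coupling; the noise-corrected integral identity with support preservation; and matrix Bernstein with variance proxy $kL/\rho$.

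The gap is in step (c). You bound the candidates competing for selection by the \emph{on-support active} candidates, and claim these number at most $dk\rho t_\ell^2$ ``and at most $d\rho\sigma^4$ in total''. The second bound is false. Their expected number at scale $\ell$ is $|\mathcal S|\rho t_\ell^2$, which is of order $dk\rho\sigma^4$ when $t_\ell\approx\sigma^2$. An example is block-diagonal $\Sigma$ with $k\times k$ blocks $(\sigma^2/k)\mathbf 1\mathbf 1^\top$. At every scale with $kt_\ell^2>\sigma^4$ the capped quota is only $s_\ell\approx 64(d\rho\sigma^4+H)$. So your Chernoff step fails exactly in the regime the cap creates. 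That set is also the wrong one to count: on-support active candidates with small $|S_e|$ score near $-1$, which is below the score of off-support active candidates.

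The missing idea is to count only candidates that can actually outrank one whose gate can be nonzero, and to bound their positions using the $\ell_2$ row bound on $\Sigma$.
\begin{itemize}
\item Since $|Z_c|\le 1/4$, a nonzero gate requires $|S_e|/T_c>3/4$, i.e.\ $q_c>-1/4$. Your cutoff $|S_e|/T_c>1/4$ is too weak; the $3/4$ cutoff is what produces a score gap.
\item With $|\zeta_c|\le 1/32$, any candidate ranking above such a candidate lies in $\mathcal R_\ell=\{c:q_c\ge -3/8\}$. This forces $U_c=1$ and $|S_e|\ge 5t_\ell/8$, hence $|\Sigma_e|>t_\ell/2$ on $\mathcal E_{\mathrm{cov}}$.
\item Because $0\preceq\Sigma\preceq\sigma^2 I$ gives $\sum_j\Sigma_{ij}^2=(\Sigma^2)_{ii}\le\sigma^4$, there are at most $d\min\{k,4\sigma^4/t_\ell^2\}$ such positions.
\item So on $\mathcal E_{\mathrm{cov}}$, $|\mathcal R_\ell|$ is dominated by a sum $B_\ell$ of independent activation indicators over these fixed positions, with $\mathbb E B_\ell\le 4d\rho\min\{kt_\ell^2,\sigma^4\}$. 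This matches the quota, and an exponential-moment bound gives $B_\ell\le s_\ell$ with probability at least $1-e^{-H}$ per level.
\end{itemize}

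A smaller point concerns the clipping coupling. It needs $r\lesssim 1/\sqrt{\log(S_\star/\delta)}$, not merely a small constant, so that the total-variation cost $2S_\star e^{-1/(32r^2)}$ fits in the $\delta$ budget; the paper ensures $r\le 1/(8\sqrt\Gamma)$. Also, $\kappa\le 4/3$ holds for every $r$, since $Z$ is clipped, so that is not a reason to require $r$ small.
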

Recalling that $L = \left\lceil \log_2 \frac{512k}{\alpha}\right \rceil$ and $H= \log(80dL/\beta)$, Algorithm~\ref{alg:main} ensures a sample complexity bound of $\tilde{O}\left( k^2/\alpha^2 + k\sqrt{d}/(\alpha\varepsilon)\right) $. This improves the privacy-dependent term by a factor of $\sqrt{k}$ compared to the upper bound in \citep{kumar2026curse}, and matches
the lower bound of $\tilde{\Omega}( k^2/\alpha^2 + k\sqrt{d}/(\alpha\varepsilon) )$ \citep{kumar2026curse} in its applicable parameter regime.

In the rest of this section, we present the proof sketch of Theorem~\ref{thm:main}. 
We postpone the supporting lemmas and proofs to Appendices~\ref{sec:accuracy:lemmas} and \ref{app:privacy}.  Assume that the condition in \eqref{eq:sample} holds, with $C$ to be specified later.

\subsection{Bound of Approximation Error}

We work under the sampling model and the sample-size condition of the
main theorem. Recall that $S$ is the clipped empirical covariance matrix.
For the accuracy analysis, introduce an ideal estimator
$\Sigma^{\mathrm{id}}$ that includes all candidates in $\mathcal C$.
Generate independent reconstruction noises $Z_c\sim\nu_r$ for every
$c\in\mathcal C$, independently of the data, public candidate randomness,
and selection noises, and define
\begin{equation}
\label{eq:accuracy:ideal}
\begin{aligned}
\Sigma^{\mathrm{id}}
&:=
\sum_{\ell=0}^{L-1}
\sum_{c=(e,\ell,h)\in\mathcal C_\ell}
\frac{U_c}{\kappa\rho t_\ell}
\psi\!\left(\frac{S_e}{T_c}+Z_c\right)B_e,\qquad 
\overline\Sigma(S)
:=
\mathbb E\!\left[\Sigma^{\mathrm{id}}\mid S\right].
\end{aligned}
\end{equation}
The conditional expectation averages over both the public candidate
randomness and the reconstruction noises. The actual mechanism uses
the same contributions, but only for selected candidates. Since
selection does not inspect the reconstruction noises, generating them
in advance and using them on the selected labels preserves the
distribution of the actual output.

We decompose the estimation error into four terms:
\begin{equation}
\label{eq:accuracy:decomposition}
\begin{aligned}
\widehat\Sigma-\Sigma
={}&
\underbrace{\widehat\Sigma-\Sigma^{\mathrm{id}}}
_{\text{selection discrepancy}}
+
\underbrace{\Sigma^{\mathrm{id}}-\overline\Sigma(S)}
_{\text{reconstruction fluctuation}} + 
\underbrace{\overline\Sigma(S)-S}
_{\text{reconstruction bias}}
+
\underbrace{S-\Sigma}
_{\text{sampling and clipping error}}.
\end{aligned}
\end{equation}
Define
\[
\mathcal E_{\mathrm{cov}}
=
\left\{
\|S-\Sigma\|_{\max}\le\frac{t_0}{16}
\right\},
\qquad
\mathcal S=\{e\in \mathcal{E}_d:\Sigma_e\ne0\}.
\]

By Bernstein's inequality (Lemma~\ref{lem:scalar-bernstein}),
the choice of clipping radius $R$ makes the probability that any observed
coordinate is altered at most $\beta/20$. By choosing $C$ large enough,  
Lemma~\ref{lem:accuracy:empirical} therefore gives $
\Pr(\mathcal E_{\mathrm{cov}}^c)\le \beta/10.$
In particular, $\|S-\Sigma\|_{\max}\le t_0/16$ with the required
probability. Clipping is accounted for through its failure probability
rather than through an additional deterministic bias term.

We analyze the four terms in \eqref{eq:accuracy:decomposition} separately, then combine the last two before
using population sparsity to obtain an operator-norm bound.

\paragraph{Selection discrepancy.}
The selection step imposes a deterministic query budget for privacy,
but under the statistical model it does not change the ideal
reconstruction with high probability. Bounded reconstruction noise
implies that an active candidate can contribute only if
$|S_e|/T_c>3/4$. On $\mathcal E_{\mathrm{cov}}$ for some $c =(e,\ell, h)$, quota coverage and
score separation ensure that every such candidate is retained, except
on a small-probability event. Under the shared-noise coupling, every
omitted ideal contribution is therefore exactly zero.
Lemma~\ref{lem:accuracy:coupling} gives
\begin{equation}
\label{eq:accuracy:coupling-bound}
\Pr\!\left(
\mathcal E_{\mathrm{cov}}
\cap\{\widehat\Sigma\ne\Sigma^{\mathrm{id}}\}
\right)
\le\frac{\beta}{20}.
\end{equation}
This is a model-dependent accuracy statement, not a uniform guarantee
for every fixed input. Selection remains necessary for the existing
privacy analysis because it bounds the number of protected queries
even on datasets whose empirical covariance has many large entries.

\paragraph{Reconstruction fluctuation.}
Fix an empirical matrix $S$ satisfying $\mathcal E_{\mathrm{cov}}$.
Conditional on this $S$, the ideal candidate contributions are
independent and vanish outside $\mathcal S$. The quadratic allocation
of active candidates across scales balances their inverse-scale
weights: each position contributes at most $1/\rho$ to the
second-moment bound at each scale. Since each row has at most $k$
support positions, the conditional matrix variance is at most
$kL/\rho = \widetilde{O}\left(\alpha^2\sigma^4\right)$, while each centered contribution has operator norm at most
$2/(\rho t_0) = \widetilde{O}(\alpha \sigma^2)$. Lemma~\ref{lem:accuracy:matrix} therefore gives
\begin{equation}
\label{eq:accuracy:fluctuation}
\Pr\!\left(
\left.
\|\Sigma^{\mathrm{id}}-\overline\Sigma(S)\|_{\mathrm{op}}
>
\sqrt{\frac{2kL\log(40d/\beta)}{\rho}}+\frac{4\log(40d/\beta)}{3\rho t_0}
\,\right|\,S
\right)
\le\frac{\beta}{20}.
\end{equation}
The choices of $\rho$ and $t_0$ make the displayed threshold at most
$\alpha\sigma^2/2$. This argument controls the centered reconstruction
directly in operator norm; it does not first bound entrywise
fluctuations and then multiply by $k$. 

\paragraph{Reconstruction bias.}
The activation rates, random thresholds, and public weights are
calibrated so that the conditional mean estimates a threshold integral:
\[
\overline\Sigma_e(S)
= \mathbb{E}[\Sigma^{\mathrm{id}}_e|S] = 
\frac{1}{\kappa}
\int_{t_0}^{t_L}
\mathbb E_Z\psi(S_e/t+Z)\,dt.
\]
The noise-corrected identity
\[
\frac{1}{\kappa}
\int_0^\infty
\mathbb E_Z\psi(a/t+Z)\,dt=a,
\qquad a\in\mathbb R,
\]
shows that random activation and noisy gating introduce no additional
uncorrected multiplicative bias. On $\mathcal E_{\mathrm{cov}}$, the
largest threshold $t_L$ is sufficiently large that the integrand
vanishes above $t_L$. The discrepancy from $S_e$ therefore comes
entirely from omitting thresholds below $t_0$.
Lemma~\ref{lem:accuracy:mean} yields
\begin{equation}
\label{eq:accuracy:bias-max}
\|\overline\Sigma(S)-S\|_{\max}\le t_0.
\end{equation}
Thus, the systematic reconstruction error is a lower-threshold
truncation error, whereas random omissions are accounted for by the
centered fluctuation term.

\paragraph{Combining the bounds.}
To convert the entrywise bounds for the last two terms into an
operator-norm bound, we first add them and then use support
preservation. On $\mathcal E_{\mathrm{cov}}$, a true-zero position
satisfies
\[
\left|\frac{S_e}{T_c}+Z_c\right|
\le\frac{1}{16}+\frac{1}{4}<1,
\]
so every ideal contribution at that position vanishes. Thus
$\overline\Sigma(S)-\Sigma$ has at most $k$ nonzero entries per row,
even though $\overline\Sigma(S)-S$ and $S-\Sigma$ may individually be
dense. It follows that
\begin{equation}
\label{eq:accuracy:bias-op}
\begin{aligned}
\|\overline\Sigma(S)-\Sigma\|_{\mathrm{op}}
&\le k\|\overline\Sigma(S)-\Sigma\|_{\max}
\\
&\le k\left(
\|\overline\Sigma(S)-S\|_{\max}
+\|S-\Sigma\|_{\max}
\right) \le\frac{17}{16}kt_0
=\frac{17}{4096}\alpha\sigma^2.
\end{aligned}
\end{equation}
Consequently, whenever the empirical, coupling, and matrix
concentration conclusions hold,
\begin{equation}
\label{eq:accuracy:final-error}
\begin{aligned}
\|\widehat\Sigma-\Sigma\|_{\mathrm{op}}
&\le
\|\widehat\Sigma-\Sigma^{\mathrm{id}}\|_{\mathrm{op}}
+
\|\Sigma^{\mathrm{id}}-\overline\Sigma(S)\|_{\mathrm{op}}
+
\|\overline\Sigma(S)-\Sigma\|_{\mathrm{op}}
<\alpha\sigma^2.
\end{aligned}
\end{equation}
Define $
\mathcal E_{\mathrm{mat}}
=
\left\{
\|\Sigma^{\mathrm{id}}-\overline\Sigma(S)\|_{\mathrm{op}}
\le\frac{\alpha\sigma^2}{2}
\right\}$. 
By \eqref{eq:accuracy:fluctuation},
\[
\Pr(\mathcal E_{\mathrm{cov}}\cap\mathcal E_{\mathrm{mat}}^c)
=
\mathbb E\!\left[
\mathbf 1_{\mathcal E_{\mathrm{cov}}}
\Pr(\mathcal E_{\mathrm{mat}}^c\mid S)
\right]
\le\frac{\beta}{20}.
\]
A union bound therefore gives
\begin{equation}
\label{eq:accuracy:failure}
\begin{aligned}
\Pr\!\left(
\|\widehat\Sigma-\Sigma\|_{\mathrm{op}}>\alpha\sigma^2
\right)
&\le
\Pr(\mathcal E_{\mathrm{cov}}^c)
+
\Pr\!\left(
\mathcal E_{\mathrm{cov}}
\cap\{\widehat\Sigma\ne\Sigma^{\mathrm{id}}\}
\right)+
\Pr(\mathcal E_{\mathrm{cov}}\cap\mathcal E_{\mathrm{mat}}^c)
\le
\frac{\beta}{5}<\beta.
\end{aligned}
\end{equation}

\subsection{The differential privacy guarantee}
\label{sec:privacy}

We then establish the privacy guarantee of our mechanism for
every neighboring pair $D\sim D'$ under replace-one adjacency. We first fix all
public activations and thresholds $\{(U_c,T_c)\}_{c\in\mathcal C}$ and
prove a privacy guarantee uniform over their realizations.

Recall
\[
\Delta=\frac{2R^2}{n},
\qquad
A=\sum_{\ell=0}^{L-1}\frac{s_\ell}{t_\ell^2},
\qquad
S_\star=\sum_{\ell=0}^{L-1}s_\ell,
\qquad
r=\frac{8\Delta\sqrt A}{\varepsilon}
\sqrt{\log\frac{32}{\delta}}.
\]
Write $
\eta:=2S_\star\exp\!\left(-\frac{1}{32r^2}\right)$. For a sufficiently large choice of $C$ in \eqref{eq:sample},
Lemma~\ref{lem:privacy:safety} yields $
    r\le \frac{1}{8\sqrt{\Gamma}}$ and $
    (1+e^{\varepsilon/4})\eta\le \frac{\delta}{16}$.

\paragraph{Private selection.}
Clipping controls the change in every empirical entry on every input:
\[
|S_e(D)-S_e(D')|\le\Delta.
\]
An active score consequently has sensitivity at most
$\Delta/t_\ell$, while an inactive score is the constant $-2$ and has
sensitivity zero. Thus the score vector at scale $\ell$ has
$L_\infty$ sensitivity at most $\Delta/t_\ell$. Applying the oneshot
Laplace selection theorem with the prescribed scale $b_\ell$ makes the
unordered selected label set $J_\ell$ to be 
$(\varepsilon',\delta')$-differentially private, where
$\varepsilon'=\varepsilon/(8L)$ and $\delta'=\delta/(32L)$.
A scale with $s_\ell=M_\ell$ returns all labels deterministically.
Composition over the scales therefore gives
\begin{equation}
\label{eq:privacy:selection}
J=(J_0,\ldots,J_{L-1})
\quad\text{is}\quad
(\varepsilon/8,\delta/32)\text{-differentially private}.
\end{equation}
Lemma~\ref{lem:privacy:selection} provides the full proof of \eqref{eq:privacy:selection}. This step
protects the data-dependent choice of which candidate tests to evaluate.

\paragraph{Uniform sensitivity after selection.}
For any fixed possible label collection $j=(j_0,\ldots,j_{L-1})$ with
$|j_\ell|=s_\ell$, arrange its labels in a public order and set
\[
F_j(D):=
\left(U_cS_{e}(D)/T_c\right)_{c=(e,\ell, h)\in j}
\in\mathbb R^{S_\star}.
\]
By Lemma~\ref{lem:privacy:query}, the public quotas bound the squared sensitivity of this entire vector:
\begin{equation}
\label{eq:privacy:query-sensitivity}
\begin{aligned}
\|F_j(D)-F_j(D')\|_2^2
&\le
\sum_{\ell=0}^{L-1}\sum_{c\in j_\ell}
\frac{\Delta^2}{t_\ell^2}=\Delta^2\sum_{\ell=0}^{L-1}
\frac{s_\ell}{t_\ell^2}
=\Delta^2 A.
\end{aligned}
\end{equation}

\paragraph{Gaussian protection and bounded reconstruction noise.} For a random variable $X$, let $\mathcal{L}(X)$ denote the distribution of $X$.
 Let $G\sim\mathcal N(0,r^2I_{S_\star})$ and $P_D^j:=\mathcal L(F_j(D)+G)$.

The preceding sensitivity bound and the prescribed choice of $r$
ensure that the ordinary Gaussian mechanism
$P_D^j$ is
$(\varepsilon/4,\delta/16)$-differentially private for every fixed $j$ (see Lemma~\ref{lem:privacy:gaussian}).

Define  $
Z=\operatorname{Clip}_{1/4}(G)$ and $
Q_D^j:=\mathcal L(F_j(D)+Z)$. 
Under this coupling, the Gaussian and clipped-noise outputs agree
unless some coordinate of $G$ leaves $[-1/4,1/4]$. Hence
Lemma~\ref{lem:privacy:bounded} gives the uniform comparison
\begin{equation}
\label{eq:privacy:tv}
\operatorname{TV}(P_D^j,Q_D^j)
\le
\Pr\!\left(\max_{a\le S_\star}|G_a|>\frac14\right)
\le\eta.
\end{equation}
Replacing the two Gaussian laws in the privacy inequality by their
clipped-noise counterparts costs at most
$(1+e^{\varepsilon/4})\eta$ in the additive privacy parameter.
Consequently,
\begin{equation}
\label{eq:privacy:reconstruction}
D\longmapsto F_j(D)+Z
\quad\text{is}\quad
(\varepsilon/4,\delta/8)\text{-differentially private},
\end{equation}
uniformly over all fixed $j$ (see Lemma~\ref{lem:privacy:bounded}).

\paragraph{Composition and postprocessing.}
Selection of $J$ does not inspect the fresh reconstruction noises. Therefore,
after the first-stage transcript equals $j$, the second stage executes
exactly the fixed-label mechanism just analyzed. Adaptive composition
gives privacy for the auxiliary transcript
\begin{equation}
\label{eq:privacy:transcript}
\mathcal T(D):=(J,F_J(D)+Z),
\qquad
\mathcal T
\text{ is }
(3\varepsilon/8,5\delta/32)\text{-differentially private}.
\end{equation}
Writing $V=F_J(D)+Z$, the final matrix is recovered as
\begin{equation}
\label{eq:privacy:postprocessing}
\widehat\Sigma
=
\sum_{\ell=0}^{L-1}
\sum_{c=(e,\ell,h)\in J_\ell}
\frac{U_c}{\kappa\rho t_\ell}\psi(V_c)B_e.
\end{equation}
This map uses only the protected transcript $(J,V)$ and public quantities. Thus the matrix
inherits the privacy guarantee from $(J,V)$. Finally, the bound is
uniform over the public activations and 
 thresholds $\{U_c, T_c\}_{c\in \mathcal{C}}$, so averaging over
their data-independent law preserves it, even if those public variables
are revealed. Lemma~\ref{lem:privacy:composition} formalizes these
steps. Since $3\varepsilon/8\le\varepsilon$ and
$5\delta/32\le\delta$, the mechanism in Algorithm~\ref{alg:main} is
$(\varepsilon,\delta)$-differentially private on every input dataset.

%% file: dis.tex
\section{Discussion}\label{sec:dis}

We developed a multiscale randomized-threshold mechanism for private
sparse covariance estimation. The proposed mechanism achieves operator-norm error
$\alpha\sigma^2$ with probability at least $1-\beta$ using
$\widetilde O(k^2/\alpha^2+k\sqrt d/(\alpha\varepsilon))$ samples
under the sub-Gaussian assumption, while satisfying $(\varepsilon,\delta)$-differential privacy
under replace-one adjacency for arbitrary input datasets.
Our result removes the additional $\sqrt{k}$ factor from the
previous privacy term, matching the lower bound of
\citet{kumar2026curse} up to logarithmic factors in its applicable parameter regime. 
The key insight is to represent covariance entries through
noise-corrected, randomly weighted threshold tests, enabling
direct control of reconstruction fluctuations at the matrix
level without requiring exact support recovery.

Three directions merit further investigation. First, there is still a gap between the upper and lower bounds when $\delta >\varepsilon^2/d^2$. It remains open whether the lower bound of $k^2/\alpha^2 + k\sqrt{d}/(\alpha\varepsilon)$ still applies in this regime. 
Second, extending the mechanism to approximately sparse covariance
matrices would broaden its applicability. This extension would
require replacing the exact-support argument with quantitative
bounds on the bias and variance contributed by small but nonzero
entries.
Thirdly, a direct privacy analysis of the final aggregated matrix
may yield tighter guarantees than the current analysis, which
establishes privacy for the more informative auxiliary transcript
containing the selected labels and perturbed queries.
Understanding how aggregation conceals intermediate information
could enable sharper noise calibration, potentially improving
the logarithmic factors or practical accuracy of the mechanism.

%% file: appendix.tex
% Detailed proofs. Keep this order: the sample-size verification closes the
% assumptions used by privacy, coupling, and ideal matrix concentration.

Throughout the appendix, for any candidate test $c=(e,\ell,h)$ and
any candidate-indexed quantity $X$, we use the notations $X_c$ and
$X_{e,\ell,h}$ interchangeably.

\section{Additional Related Works}\label{app:rel}

\paragraph{Structured covariance estimation.}
Non-private research studies how covariance structure reduces statistical
complexity and which rates are minimax optimal.
For sparse covariance matrices, hard, generalized, and adaptive
thresholding provide operator-norm guarantees
\citep{bickel2008thresholding,rothman2009generalized,cai2011adaptive}.
\citet{cai2012optimal} establish minimax rates under specified sparsity
and high-dimensional conditions, while \citet{cai2016estimating} survey
structured covariance and precision-matrix estimation.
Operator-norm guarantees control the aggregate effect of estimation
errors across entries, rather than treating entries as unrelated scalar
estimation problems.
Here, sparsity concerns the population covariance, not individual
observations; sparse covariance therefore need not imply sparse empirical
outer products or low sensitivity.

\paragraph{Private matrix release.}
This direction studies empirical covariance release under bounded-input
assumptions, with guarantees depending on the loss and privacy definition.
\citet{amin2019differentially} study pure-private covariance release with
Frobenius-error guarantees.
\citet{dong2022differentially} obtain trace-sensitive and tail-sensitive
Frobenius-error bounds under concentrated differential privacy.
\citet{d2025purely} study pure-private empirical covariance estimation,
including spectral-norm guarantees.
The distinction between empirical matrix release and population
estimation also matters, since the latter includes sampling error.
Comparisons must preserve input normalization: bounds for unit-norm
records do not directly translate into dimension-independent sample
complexity for population covariance estimation to operator-norm error
$\alpha\sigma^2$ under a sub-Gaussian model.

\paragraph{Private distribution learning and stable estimation.}
A complementary direction estimates distributional parameters without
strong prior bounds on Gaussian parameters.
\citet{kamath2019privately} use recursive private preconditioning for
high-dimensional distribution learning, including Gaussian learning in
total variation distance.
\citet{brown2023fast} develop affine-invariant private mean and covariance
estimators for sub-Gaussian distributions using stable estimation.
These approaches address general distributional estimation rather than
specifically exploiting unknown row-sparse covariance support.
Here, the sub-Gaussian scale is known, the covariance support is unknown,
and the loss is absolute operator norm.

\paragraph{Private sparse estimation and lower bounds.}
\citet{wang2021differentially} develop private thresholding for high-dimensional
sparse covariance estimation.
Under the same sub-Gaussian normalization as this paper,
\citet{kumar2026curse} combine private row-wise selection with noisy entry
reconstruction, obtaining
$\widetilde{O}(k^2/\alpha^2+k^{3/2}\sqrt{d}/(\alpha\varepsilon))$
sample complexity. On the other side,
their privacy lower bound of $\tilde{\Omega}( k\sqrt{d}/(\alpha \varepsilon))$
draws on techniques related to 
\citet{narayanan2024better}, while \citet{cai2012optimal} provide the
non-private benchmark.
Our rate matches this lower bound up to logarithmic factors when 
$\log d\lesssim  k \lesssim d^{1/2-\lambda}$ for some constant $\lambda >0$ and $\delta\leq \epsilon^2/d^2$.
These additional restrictions are not required for the upper-bound
guarantee.

\paragraph{Private selection.}
Private top-$k$ selection is studied by \citet{durfee2019practical} and
\citet{qiao2021oneshot}; the latter provide the oneshot Laplace guarantee
used here.
Our mechanism selects threshold-test labels under public, scale-dependent
quotas rather than recovering the population support.
These quotas bound the sensitivity of the reconstruction queries for
every fixed selected label set on arbitrary datasets.

\section{Supporting lemmas for the approximation error}
\label{sec:accuracy:lemmas}

All notation is inherited from the main argument. For $c = (e,\ell,h)$, define
\[
W_c
:=
\frac{U_c}{\kappa\rho t_\ell}
\psi(S_e/T_c+Z_c)B_e,
\qquad
\overline{W}_c:=W_c-\mathbb E[W_c\mid S],
\qquad c=(e,\ell,h).
\]
In particular,
$\Sigma^{\mathrm{id}}=\sum_{c\in\mathcal C}W_c$ and
$\Sigma^{\mathrm{id}}-\overline\Sigma(S)=\sum_{c\in\mathcal C}\overline{W}_c$.

\begin{lemma}[Empirical covariance concentration]
\label{lem:accuracy:empirical}
There is a universal constant $C$ such that
\[
n\ge C\frac{k^2}{\alpha^2}\log\frac{80d}{\beta}
\quad\Longrightarrow\quad
\Pr(\mathcal E_{\mathrm{cov}}^c)\le\frac{\beta}{10}.
\]
\end{lemma}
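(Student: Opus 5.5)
We prove the claim by working on the event that clipping changes nothing, and on that event applying a scalar Bernstein bound to each entry of the unclipped empirical covariance, followed by a union bound over entries. Let $\mathcal E_{\mathrm{clip}}=\{|X_i(j)|\le R \text{ for all } i\in[n],\,j\in[d]\}$.

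First we bound the clipping failure. Each coordinate $X_i(j)$ is $\sigma$-sub-Gaussian, by \eqref{eq:model} with $u=\pm\lambda e_j$. Hence
\[
\Pr(|X_i(j)|>R)\le 2\exp\!\left(-\frac{R^2}{2\sigma^2}\right)=\frac{2\beta}{40nd}.
\]
A union bound over all $nd$ coordinates gives $\Pr(\mathcal E_{\mathrm{clip}}^c)\le\beta/20$. On $\mathcal E_{\mathrm{clip}}$ we have $Y_i=X_i$, so $S$ equals the unclipped empirical covariance $\widetilde S=\frac1n\sum_i X_iX_i^\top$. Therefore
\[
\Pr(\mathcal E_{\mathrm{cov}}^c)\le \beta/20+\Pr\!\left(\|\widetilde S-\Sigma\|_{\max}>t_0/16\right),
\]
and it suffices to show the last probability is at most $\beta/20$.

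Next, fix an entry $(i,j)$, with $i=j$ allowed. The variables $X(i)X(j)-\Sigma_{ij}$ are i.i.d., mean zero, and sub-exponential with $\psi_1$-norm at most $C_0\sigma^2$. This follows because a product of two $\sigma$-sub-Gaussian variables is sub-exponential with norm $\lesssim\sigma^2$, and centering only changes the constant. Bernstein's inequality for sub-exponential variables (the scalar Bernstein Lemma~\ref{lem:scalar-bernstein}, in its sub-exponential form) then gives, for $u\ge0$,
\[
\Pr\!\left(|\widetilde S_{ij}-\Sigma_{ij}|>u\right)\le 2\exp\!\left(-c\,n\min\!\left\{\frac{u^2}{\sigma^4},\frac{u}{\sigma^2}\right\}\right).
\]
Take $u=t_0/16=\alpha\sigma^2/(4096k)$. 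Then $u/\sigma^2\le1$, so the quadratic branch is active and the exponent is $c\,n\alpha^2/(4096^2k^2)$. Union bounding over the at most $d^2$ entries, the total probability is at most $2d^2\exp(-c' n\alpha^2/k^2)$. This is at most $\beta/20$ once $n\ge C(k^2/\alpha^2)\log(80d/\beta)$ for a suitable universal $C$, using $\log(40d^2/\beta)\le2\log(80d/\beta)$.

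The main obstacle is verifying the sub-exponential tail of $X(i)X(j)$ with the correct $\sigma^2$ scaling, and checking that the constants are genuinely universal. For the tail, I would use $|ab|\le(a^2+b^2)/2$ together with the sub-Gaussian MGF bound on $X(i)^2$.

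An alternative is to apply Bernstein directly to the clipped variables $Y(i)Y(j)$, which are bounded by $R^2$. That route only adds a lower-order term of order $R^2\log(d/\beta)/n$, absorbed by the second term of \eqref{eq:sample}, but it also introduces a bias $|\mathbb E[Y(i)Y(j)]-\Sigma_{ij}|$, which I would avoid. I therefore prefer the coupling with $\mathcal E_{\mathrm{clip}}$ above, which avoids both the bias and the $R$-dependence.
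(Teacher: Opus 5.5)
Your proposal is correct and follows the paper's proof essentially step for step. Both use the no-clipping event with failure probability $2nd\,e^{-R^2/(2\sigma^2)}=\beta/20$, the reduction of $S$ to the unclipped empirical covariance on that event, entrywise sub-exponential Bernstein at $\tau=t_0/16=\alpha\sigma^2/(4096k)$, and a union bound over at most $d^2$ entries. The only cosmetic difference is how the $\psi_1$ scale of $X(i)X(j)$ is certified: you use $|ab|\le(a^2+b^2)/2$ with the MGF of $X(i)^2$, whereas the paper uses moment bounds via Cauchy--Schwarz.
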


\begin{proof}
Recall $R=\sigma\sqrt{2\log(40nd/\beta)}$. The coordinatewise
sub-Gaussian tail bound and a union bound give
\[
\Pr\!\left(\max_{a,j}|X_{aj}|>R\right)
\le
2nd\exp\!\left(-\frac{R^2}{2\sigma^2}\right)
=\frac{\beta}{20}.
\]
On the complementary event $\mathcal E_{\mathrm{clip}}$, clipping
does not alter any record, and hence
\[
S=S^{\mathrm{raw}}
:=\frac{1}{n}\sum_{a=1}^n X_aX_a^\top.
\]

For fixed $i,j$, the centered products
$X_{ai}X_{aj}-\Sigma_{ij}$ are independent across records and have
sub-exponential scale bounded by $C\sigma^2$. Indeed, sub-Gaussian
moment bounds and Cauchy--Schwarz give, for every integer $p\ge1$,
\[
\mathbb E|X_{ai}X_{aj}|^p
\le
\left(
\mathbb E|X_{ai}|^{2p}\,
\mathbb E|X_{aj}|^{2p}
\right)^{1/2}
\le(C\sigma^2p)^p,
\]
and centering changes this bound by at most a universal constant.
Then, by Bernstein's inequality (Lemma~\ref{lem:scalar-bernstein}),
\[
\Pr\!\left(
|S^{\mathrm{raw}}_{ij}-\Sigma_{ij}|>\tau
\right)
\le
2\exp\!\left[
-cn\min\left\{
\frac{\tau^2}{\sigma^4},\frac{\tau}{\sigma^2}
\right\}
\right].
\]
Taking $\tau=t_0/16=\alpha\sigma^2/(4096k)$ and applying a union
bound over at most $d^2$ entries gives
\[
\Pr\!\left(
\|S^{\mathrm{raw}}-\Sigma\|_{\max}>\frac{t_0}{16}
\right)
\le
2d^2\exp\!\left(-c'n\frac{\alpha^2}{k^2}\right)
\le\frac{\beta}{20}
\]
under the stated sample-size condition.

Finally,
\[
\mathcal E_{\mathrm{cov}}^c
\subseteq
\mathcal E_{\mathrm{clip}}^c
\cup
\left\{
\|S^{\mathrm{raw}}-\Sigma\|_{\max}>\frac{t_0}{16}
\right\}.
\]
Adding the two failure probabilities proves the claim.
\end{proof}

\begin{lemma}
\label{lem:accuracy:mean} Recall that $\mathcal S=\{e\in \mathcal{E}_d:\Sigma_e\ne0\}$. 
For every fixed $S$ satisfying $\mathcal E_{\mathrm{cov}}$,
\[
\overline\Sigma_e(S)
=
\frac{1}{\kappa}
\int_{t_0}^{t_L}
\mathbb E_Z\psi(S_e/t+Z)\,dt,
\qquad
\|\overline\Sigma(S)-S\|_{\max}\le t_0.
\]
Moreover, if $e\notin\mathcal S$, every ideal contribution at position
$e$ vanishes, so $
\Sigma^{\mathrm{id}}_e=\overline\Sigma_e(S)=0$.
In particular,
\[
\|\overline\Sigma(S)-\Sigma\|_{\mathrm{op}}
\le\frac{17}{16}kt_0.
\]
\end{lemma}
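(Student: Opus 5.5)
The plan is to compute $\overline\Sigma_e(S)$ candidate by candidate, recognize a Riemann-type integral over thresholds, and then evaluate that integral with a Fubini argument that exploits the boundedness $|Z|\le 1/4$. Fix $S$ satisfying $\mathcal E_{\mathrm{cov}}$ and a position $e$. The coefficient of $B_e$ is the same scalar for diagonal and off-diagonal $e$, since $B_e$ already accounts for both symmetric entries. So it suffices to work with scalar coefficients.

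\textbf{Step 1: the conditional mean is a threshold integral.} For a candidate $c=(e,\ell,h)$, independence of $U_c$, $T_c$ and $Z_c$ from each other and from $S$ gives
\[
\mathbb E\!\left[\frac{U_c}{\kappa\rho t_\ell}\psi\!\left(\frac{S_e}{T_c}+Z_c\right)\Bigm| S\right]
=\frac{p_\ell}{\kappa\rho t_\ell}\cdot\frac{1}{t_\ell}\int_{t_\ell}^{2t_\ell}\mathbb E_Z\psi(S_e/t+Z)\,dt .
\]
Summing over the $m_\ell$ repetitions uses $m_\ell p_\ell=\mu_\ell=\rho t_\ell^2$, which cancels the prefactor and leaves $\frac1\kappa\int_{t_\ell}^{t_{\ell+1}}$. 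Summing over $\ell=0,\dots,L-1$ then telescopes to $\frac1\kappa\int_{t_0}^{t_L}$, which is the displayed formula.

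\textbf{Step 2: the noise-corrected identity.} Write $a=S_e$. If $a\ge0$, then $a/t+Z\ge-1/4>-1$, so the negative part of $\psi$ never fires and $\mathbb E_Z\psi(a/t+Z)=\Pr(a/t>1-Z)$. Since $1-Z\ge 3/4>0$, Tonelli gives
\[
\int_0^\infty \Pr\!\big(t<a/(1-Z)\big)\,dt=a\,\mathbb E[1/(1-Z)]=a\kappa .
\]
The case $a<0$ follows because $\psi$ is odd, and we use the same formula with $1+Z$ in place of $1-Z$. This needs $Z\overset{d}{=}-Z$, which holds because $\nu_r$ is a symmetrically clipped centered Gaussian, so $\mathbb E[1/(1+Z)]=\kappa$. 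Hence $\frac1\kappa\int_0^\infty\mathbb E_Z\psi(a/t+Z)\,dt=a$.

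\textbf{Step 3: truncation error.} For the upper tail, $0\preceq\Sigma\preceq\sigma^2I$ gives $|\Sigma_e|\le\sqrt{\Sigma_{ii}\Sigma_{jj}}\le\sigma^2$. On $\mathcal E_{\mathrm{cov}}$ this yields $|S_e|\le\sigma^2+t_0/16<\tfrac{9}{8}\sigma^2$. For $t\ge t_L\ge2\sigma^2$ we then get $|a/t+Z|\le 9/16+1/4<1$, so the integrand vanishes beyond $t_L$. For the lower tail, the integrand has absolute value at most $1$ and $\kappa\ge1$, so omitting $[0,t_0]$ costs at most $t_0/\kappa\le t_0$. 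Together these give $\|\overline\Sigma(S)-S\|_{\max}\le t_0$.

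\textbf{Step 4: vanishing off the support.} If $e\notin\mathcal S$, then $|S_e|\le t_0/16$ on $\mathcal E_{\mathrm{cov}}$. Every threshold satisfies $T_c\ge t_\ell\ge t_0$, so $|S_e/T_c+Z_c|\le 1/16+1/4<1$ deterministically. Thus every $\psi$ term is $0$, and $\Sigma^{\mathrm{id}}_e=\overline\Sigma_e(S)=0=\Sigma_e$.

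\textbf{Step 5: operator-norm bound.} By Step 4, $\overline\Sigma(S)-\Sigma$ is supported on $\mathcal S$, so it has at most $k$ nonzeros per row. Its max norm is at most $\|\overline\Sigma(S)-S\|_{\max}+\|S-\Sigma\|_{\max}\le t_0+t_0/16$. The row-sum bound $\|Q\|_{\mathrm{op}}\le k\|Q\|_{\max}$ from the preliminaries then yields $\frac{17}{16}kt_0$.

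The only delicate point is Step 2: the Fubini/Tonelli exchange and the sign handling. Both rely essentially on the clipping $|Z|\le1/4$, which keeps $1\mp Z$ bounded away from $0$ and ensures that only one side of $\psi$ is ever active.
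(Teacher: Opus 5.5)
Your proposal is correct and follows essentially the same route as the paper: the per-candidate expectation with $m_\ell p_\ell=\rho t_\ell^2$ telescoping to $\frac{1}{\kappa}\int_{t_0}^{t_L}$, the Tonelli-based scalar identity (with $|Z|\le 1/4$ and the symmetry of $\nu_r$ handling the sign), vanishing of the integrand above $t_L$, the $t_0/\kappa$ lower-truncation bound, support preservation off $\mathcal S$, and the row-sum bound. The only ingredient you take for granted that the paper proves inside this lemma is $\kappa\ge 1$, which follows from Jensen's inequality and $\mathbb{E}Z=0$.
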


\begin{proof}
The symmetry and bounded support of $\nu_r$ imply
\[
1\le\kappa=\mathbb E(1-Z)^{-1}\le\frac{4}{3}.
\]
The lower bound follows from Jensen's inequality and
$\mathbb E Z=0$; the upper bound follows from $Z\le1/4$.

For $a>0$, the negative gate is impossible, while
\[
\psi(a/t+Z)
=
\mathbf 1\!\left\{t<\frac{a}{1-Z}\right\}.
\]
Integrating first over $t$ gives
\[
\int_0^\infty\mathbb E_Z\psi(a/t+Z)\,dt
=
\mathbb E_Z\frac{a}{1-Z}
=a\kappa.
\]
For $a=-b<0$, the positive gate is impossible, and the integral equals
$-b\,\mathbb E(1+Z)^{-1}=a\kappa$ by symmetry. The case $a=0$
is immediate. Thus
\begin{equation}
\label{eq:accuracy:scalar-identity}
\frac{1}{\kappa}
\int_0^\infty\mathbb E_Z\psi(a/t+Z)\,dt=a,
\qquad a\in\mathbb R.
\end{equation}

For the uniform-threshold construction, independence within a
candidate and $m_\ell p_\ell=\rho t_\ell^2$ give
\begin{align*}
&\mathbb E\!\left[
\left.
\sum_{h=1}^{m_\ell}
\frac{U_{e,\ell,h}}{\kappa\rho t_\ell}
\psi(a/T_{e,\ell,h}+Z_{e,\ell,h})
\,\right|\,S
\right]
=
\frac{m_\ell p_\ell}{\kappa\rho t_\ell^2}
\int_{t_\ell}^{2t_\ell}
\mathbb E_Z\psi(a/t+Z)\,dt
\\
&\qquad=
\frac{1}{\kappa}
\int_{t_\ell}^{2t_\ell}
\mathbb E_Z\psi(a/t+Z)\,dt.
\end{align*}
Taking $a=S_e$ and summing over the dyadic scales proves the
conditional-mean formula.

On $\mathcal E_{\mathrm{cov}}$,
\[
|S_e|\le\sigma^2+\frac{t_0}{16},
\qquad
\frac{4}{3}|S_e|<2\sigma^2\le t_L.
\]
A nonzero gate requires $t<4|S_e|/3$, so the integrand vanishes for
$t\ge t_L$. Subtracting~\eqref{eq:accuracy:scalar-identity} from
the conditional-mean formula yields
\[
\overline\Sigma_e(S)-S_e
=
-\frac{1}{\kappa}
\int_0^{t_0}\mathbb E_Z\psi(S_e/t+Z)\,dt.
\]
Since $|\psi|\le1$ and $\kappa\ge1$,
\[
|\overline\Sigma_e(S)-S_e|
\le\frac{t_0}{\kappa}\le t_0.
\]

If $\Sigma_e=0$, then $|S_e|\le t_0/16$, and every candidate at
that position satisfies
\[
|S_e/T_c+Z_c|
\le\frac{1}{16}+\frac{1}{4}<1.
\]
Every such contribution is zero. Hence
$\overline\Sigma(S)-\Sigma$ is supported on $\mathcal S$ and has
at most $k$ nonzero entries per row. Its entries satisfy
\[
|\overline\Sigma_e(S)-\Sigma_e|
\le
|\overline\Sigma_e(S)-S_e|+|S_e-\Sigma_e|
\le t_0+\frac{t_0}{16}.
\]
The row-sum bound for a symmetric matrix now gives
\[
\|\overline\Sigma(S)-\Sigma\|_{\mathrm{op}}
\le k\left(t_0+\frac{t_0}{16}\right)
=\frac{17}{16}kt_0.
\]
\end{proof}

\begin{lemma}[Conditional matrix concentration]
\label{lem:accuracy:matrix}
For every fixed $S$ satisfying $\mathcal E_{\mathrm{cov}}$ and
every $u>0$,
\[
\Pr\!\left(
\left.
\|\Sigma^{\mathrm{id}}-\overline\Sigma(S)\|_{\mathrm{op}}
>
\sqrt{\frac{2kLu}{\rho}}+\frac{4u}{3\rho t_0}
\,\right|\,S
\right)
\le2de^{-u}.
\]
In particular, for $u=\log(40d/\beta)\le H$ and the specified
public parameters,
\[
\Pr\!\left(
\left.
\|\Sigma^{\mathrm{id}}-\overline\Sigma(S)\|_{\mathrm{op}}
>\frac{\alpha\sigma^2}{2}
\,\right|\,S
\right)
\le\frac{\beta}{20}.
\]
\end{lemma}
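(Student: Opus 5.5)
We apply the matrix Bernstein inequality \citep{tropp2012user} to the sum $\Sigma^{\mathrm{id}}-\overline\Sigma(S)=\sum_{c\in\mathcal C}\overline W_c$, conditionally on $S$. Conditional on $S$, the triples $(U_c,T_c,Z_c)$ are independent across $c$ and independent of the data. Hence the summands $\overline W_c$ are independent, centered, symmetric $d\times d$ random matrices. For this inequality we need a uniform bound $K$ on $\|\overline W_c\|_{\mathrm{op}}$ and a bound $V$ on the variance proxy $\|\sum_c\mathbb E[\overline W_c^2\mid S]\|_{\mathrm{op}}$. The Bernstein tail $2d\exp(-u^2/2/(V+Ku/3))$, rewritten in the form ``with probability $\ge 1-2de^{-u}$, $\|\cdot\|_{\mathrm{op}}\le\sqrt{2Vu}+\tfrac{2Ku}{3}$'', gives the first display once $V\le kL/\rho$ and $K\le 2/(\rho t_0)$.

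\textbf{Uniform bound.} For $c=(e,\ell,h)$ we have $|\psi|\le1$, $\kappa\ge1$ and $\|B_e\|_{\mathrm{op}}=1$ (for $i<j$, $B_{ij}$ has eigenvalues $\pm1$). So $\|W_c\|_{\mathrm{op}}\le 1/(\rho t_\ell)\le 1/(\rho t_0)$, and centering at most doubles this, giving $K=2/(\rho t_0)$.

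\textbf{Variance.} We use $\mathbb E[\overline W_c^2\mid S]\preceq\mathbb E[W_c^2\mid S]$. Here $W_c^2=\frac{U_c\psi^2}{\kappa^2\rho^2t_\ell^2}B_e^2$. By Lemma~\ref{lem:accuracy:mean}, on $\mathcal E_{\mathrm{cov}}$ every contribution at $e\notin\mathcal S$ vanishes identically, so only $e\in\mathcal S$ enter. Bounding $\psi^2\le1$ and $\mathbb E U_c=p_\ell$, we sum over $h\in[m_\ell]$ using $m_\ell p_\ell=\rho t_\ell^2$. This gives $\sum_h\mathbb E[W_c^2\mid S]\preceq\frac{1}{\rho}B_e^2$ for each $(e,\ell)$. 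Summing over $\ell<L$ yields $\frac{L}{\rho}\sum_{e\in\mathcal S}B_e^2$. Now $B_{ii}^2=e_ie_i^\top$ and $B_{ij}^2=e_ie_i^\top+e_je_j^\top$, so $\sum_{e\in\mathcal S}B_e^2$ is diagonal. Its $i$-th diagonal entry counts the support positions in row $i$, which is at most $k$. Therefore $V\le kL/\rho$.

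\textbf{Specialization.} Set $u=\log(40d/\beta)$, so that $2de^{-u}=\beta/20$; also $u\le H$ since $L\ge1$. Plugging in $\rho=C_\rho Lk H/(\alpha^2\sigma^4)$ gives
\[
\sqrt{2kLu/\rho}\le\sqrt{2/C_\rho}\,\alpha\sigma^2=\alpha\sigma^2/\sqrt{2048}.
\]
With $t_0=\alpha\sigma^2/(256k)$ we get
\[
\frac{4u}{3\rho t_0}\le\frac{4\cdot256\,\alpha\sigma^2}{3C_\rho L}\le\frac{\alpha\sigma^2}{12}.
\]
The sum of the two terms is well below $\alpha\sigma^2/2$.

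\textbf{Main obstacle.} The delicate step is the variance computation: verifying that the restriction to $\mathcal S$ holds surely, conditional on $S\in\mathcal E_{\mathrm{cov}}$, and not just in expectation. This is exactly what keeps the variance at order $k$ rather than $d$. It relies on the deterministic bounds $|Z_c|\le1/4$ and $T_c\ge t_0$, which together force $|S_e/T_c+Z_c|<1$ off the support. The remaining steps are routine constant bookkeeping.
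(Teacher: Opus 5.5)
Your proposal is correct and follows essentially the same route as the paper. The paper also applies conditional matrix Bernstein with the uniform bound $2/(\rho t_0)$ and the variance bound $kL/\rho$. That variance bound likewise comes from $m_\ell p_\ell=\rho t_\ell^2$, the sure vanishing of contributions off $\mathcal S$ on $\mathcal E_{\mathrm{cov}}$, and $\sum_{e\in\mathcal S}B_e^2\preceq kI_d$, followed by the same constant bookkeeping; the only blemish is that you reuse the letter $u$ for the deviation variable inside the Bernstein tail, which is cosmetic.
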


\begin{proof}
Conditional on $S$, each $W_c$ depends only on its own independent
triple $(U_c,T_c,Z_c)$. Thus the centered matrices $\overline{W}_c$ are
independent and self-adjoint, with $\mathbb E[\overline{W}_c\mid S]=0$.

Fix $e$, $\ell$  and consider $c = (e,\ell,h)$ over different $h$. Note that $U_c^2=U_c$ and $\psi^2\le1$
imply
\begin{equation}
\label{eq:accuracy:one-scale-moment}
\begin{aligned}
\sum_{h=1}^{m_\ell}
\mathbb E[W_{e,\ell,h}^{\,2}\mid S]
&\preceq
\frac{m_\ell p_\ell}{\kappa^2\rho^2t_\ell^2}B_e^2
=
\frac{1}{\kappa^2\rho}B_e^2
\preceq\frac{1}{\rho}B_e^2.
\end{aligned}
\end{equation}
By Lemma~\ref{lem:accuracy:mean}, positions outside $\mathcal S$
contribute nothing. Direct multiplication gives
\[
B_{ii}^2=e_ie_i^\top,
\qquad
B_{ij}^2=e_ie_i^\top+e_je_j^\top
\quad(i<j).
\]
Consequently, $
\sum_{e\in\mathcal S}B_e^2\preceq kI_d$, 
because each diagonal entry of this sum counts the nonzeros in the
corresponding row of $\Sigma$. Summing
\eqref{eq:accuracy:one-scale-moment} over positions and scales yields
\[
\sum_{c\in\mathcal C}\mathbb E[W_c^2\mid S]
\preceq\frac{kL}{\rho}I_d.
\]

Centering cannot increase the second-moment matrix:
\[
\mathbb E[\overline{W}_c^2\mid S]
=
\mathbb E[W_c^2\mid S]
-\bigl(\mathbb E[W_c\mid S]\bigr)^2
\preceq\mathbb E[W_c^2\mid S].
\]
Thus the conditional matrix variance satisfies
\[
v:=
\left\|
\sum_{c\in\mathcal C}\mathbb E[\overline{W}_c^2\mid S]
\right\|_{\mathrm{op}}
\le\frac{kL}{\rho}.
\]

Since $\|B_e\|_{\mathrm{op}}=1$, $t_\ell\ge t_0$, and
$\kappa\ge1$,
\[
\|W_c\|_{\mathrm{op}}\le\frac{1}{\rho t_0},
\qquad
\|\overline{W}_c\|_{\mathrm{op}}\le\frac{2}{\rho t_0}=:B.
\]
The self-adjoint matrix Bernstein inequality
\cite[Theorem~6.1]{tropp2012user}, applied to the sum and its negative,
gives
\[
\Pr\!\left(
\left.
\left\|\sum_{c\in\mathcal C}\overline{W}_c\right\|_{\mathrm{op}}
>x
\,\right|\,S
\right)
\le
2d\exp\!\left(-\frac{x^2}{2(v+Bx/3)}\right).
\]
For $x=\sqrt{2vu}+2Bu/3$, one has
$x^2\ge2u(v+Bx/3)$, so
\[
\Pr\!\left(
\left.
\left\|\sum_{c\in\mathcal C}\overline{W}_c\right\|_{\mathrm{op}}
>\sqrt{2vu}+\frac{2Bu}{3}
\,\right|\,S
\right)
\le2de^{-u}.
\]
Substituting the bounds on $v$ and $B$ proves the first claim.
If $v=0$, all centered summands vanish almost surely and the
conclusion is immediate.

For $u=\log(40d/\beta)\le H$, recall
$t_0=\alpha\sigma^2/(256k)$ and
$\rho=C_\rho kLH/(\alpha^2\sigma^4)$. Then
\begin{align*}
\sqrt{\frac{2kLu}{\rho}}+\frac{4u}{3\rho t_0}
&=
\alpha\sigma^2
\left(
\sqrt{\frac{2u}{C_\rho H}}
+\frac{1024u}{3C_\rho LH}
\right)
\le
\alpha\sigma^2
\left(
\sqrt{\frac{2}{C_\rho}}
+\frac{1024}{3C_\rho L}
\right)
\le\frac{1}{2}\alpha\sigma^2,
\end{align*}
where the last inequality uses $C_\rho=4096$ and $L\ge11$.
Since $2de^{-u}=\beta/20$, the second claim follows.
\end{proof}

\begin{lemma}[Coupling lemma]
\label{lem:accuracy:coupling}
Under the sample-size condition of the main theorem, the actual and
ideal estimators admit a coupling such that
\[
\Pr\!\left(
\mathcal E_{\mathrm{cov}}
\cap\{\widehat\Sigma\ne\Sigma^{\mathrm{id}}\}
\right)
\le\frac{\beta}{20}.
\]
\end{lemma}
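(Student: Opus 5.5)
The coupling is the one already described in the main text. Draw the reconstruction noises $Z_c\sim\nu_r$ for every $c\in\mathcal C$ in advance, independently of the data, the public pairs $(U_c,T_c)$ and the selection noises $\zeta_c$. The actual estimator $\widehat\Sigma$ uses $Z_c$ only for $c\in J$, and the ideal estimator $\Sigma^{\mathrm{id}}$ uses all of them. Since selection never looks at the $Z_c$, this reproduces the law of the actual output. Under the coupling, $\widehat\Sigma-\Sigma^{\mathrm{id}}=-\sum_{c\notin J}W_c$. So it suffices to show that, on $\mathcal E_{\mathrm{cov}}$ and outside a small event, every $c$ with $W_c\neq0$ lies in $J$.

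\textbf{Step 1: which candidates can contribute.} Because $|Z_c|\le1/4$, a candidate with $W_c\ne0$ must satisfy $U_c=1$ and $|S_e|/T_c>3/4$. Such a candidate has score $q_c>-1/4$. Inactive candidates have score $-2$. Active candidates that cannot contribute have scores in $[-1,-1/4]$, but they need not be distinguished from contributors; we only need every contributor to be selected.

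\textbf{Step 2: count the candidates with large score.} Call $c$ \emph{relevant} if $U_c=1$ and $|S_e|/T_c>1/2$, i.e.\ $q_c>-1/2$. On $\mathcal E_{\mathrm{cov}}$, relevance forces $|S_e|>t_\ell/2\ge t_0/2$, hence $e\in\mathcal S$ (zero positions have $|S_e|\le t_0/16$). It also forces $t_\ell<2|S_e|\le 2\sigma^2+t_0/8$. Let $N_\ell$ be the number of relevant candidates at level $\ell$. Conditional on $S$, it is a sum of independent Bernoullis with mean at most
\[
|\mathcal S|\,m_\ell p_\ell\le dk\,\rho t_\ell^2 .
\]
This bound is valid when $t_\ell\lesssim\sigma^2$. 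I would refine it to $d\rho\min\{kt_\ell^2,\sigma^4\}$ using $\sum_{e\in\mathcal S}\mathbf 1\{|S_e|>t_\ell/2\}\le \sum_e S_e^2/(t_\ell/2)^2$. Then bound $\sum_{j}\Sigma_{ij}^2\le\sigma^4$ per row via $\Sigma^2\preceq\sigma^4 I$, together with the entrywise closeness on $\mathcal E_{\mathrm{cov}}$, which gives roughly $d\sigma^4/t_\ell^2$ positions. A Chernoff bound with the $H$ term and a union over the $L$ levels then gives $N_\ell\le s_\ell/2$ for every level, except with probability $\beta/40$, using $C_s=64$. Matching the $\min\{\cdot,\cdot\}$ in $s_\ell$ exactly is where the constants have to be checked.

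\textbf{Step 3: score separation.} If $s_\ell=M_\ell$, nothing is omitted. Otherwise, a contributor scores above $-1/4$ and any irrelevant candidate scores at most $-1/2$, a gap of $1/4$. Suppose all $|\zeta_c|\le 1/8$. Then each contributor beats every irrelevant candidate. Since there are fewer than $s_\ell$ relevant candidates, the top-$s_\ell$ set contains all relevant ones, and in particular all contributors. It remains to show $\Pr(\max_c|\zeta_c|>1/8)\le 2M e^{-1/(8b_\ell)}\le\beta/40$. This requires $b_\ell\le 1/(8\Gamma)$, i.e.
\[
\frac{64\,\Delta\sqrt{s_\ell\log(M_\ell/\delta')}}{t_\ell\,\varepsilon'}\lesssim \frac1\Gamma .
\]
Substituting $\Delta=2R^2/n$, $\varepsilon'=\varepsilon/(8L)$, and $s_\ell\lesssim d\rho\min\{kt_\ell^2,\sigma^4\}+H$ gives $\sqrt{s_\ell}/t_\ell\lesssim\sqrt{d\rho k}+\sqrt H/t_0$. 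With $\rho\sim kLH/(\alpha^2\sigma^4)$, this is $\sqrt{dLH}\,k/(\alpha\sigma^2)$, up to the lower-order $H$ term. So the requirement becomes $n\gtrsim k\sqrt d\,\log(nd/\beta)L^{3/2}\sqrt H\,\Gamma^{3/2}/(\alpha\varepsilon)$, which is precisely \eqref{eq:sample}.

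\textbf{Conclusion and main obstacle.} On the intersection of $\mathcal E_{\mathrm{cov}}$, the count event of Step 2, and the noise event of Step 3, every omitted term satisfies $W_c=0$, so $\widehat\Sigma=\Sigma^{\mathrm{id}}$. The two failure events have total probability at most $\beta/20$. I expect the main obstacle to be Step 2: the count of relevant candidates has to be dominated by the quota $s_\ell$ at every scale, including large scales, where the $\sigma^4$ branch of the minimum is needed and the row-$\ell_2$ bound on $S$ must be carried out carefully on $\mathcal E_{\mathrm{cov}}$.
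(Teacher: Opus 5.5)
Your proposal is correct and follows essentially the same route as the paper. Both use the shared-noise coupling; a Chernoff bound keeping the number of high-score active candidates at each level below the quota $s_\ell$ (both branches of $\min\{kt_\ell^2,\sigma^4\}$, via row sparsity and $\sum_j\Sigma_{ij}^2\le\sigma^4$); a Laplace-tail event controlled by $b_\ell\lesssim 1/\Gamma$ from the sample-size condition; and score separation placing every contributor in the top $s_\ell$. The differences are cosmetic. You use the cut $q_c>-1/2$ and noise level $1/8$ where the paper uses $q_c\ge-3/8$ and $1/32$. You also count conditionally on $S$ over $\{e:|S_e|>t_\ell/2\}$, whereas the paper counts over the data-independent set $K_\ell=\{e:|\Sigma_e|\ge t_\ell/2\}$, which makes the Bernoulli sum independent of the data without any conditioning.
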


\begin{proof}
Generate the reconstruction noises $Z_c$ for all candidates  $c\in \mathcal{C}$ before
selection, and use the same noise on each selected label in both
estimators. Selection does not inspect these noises, so this
construction preserves the distribution of the actual mechanism.

Define
\[
\mathcal R_\ell=\{c\in\mathcal C_\ell:q_c\ge-3/8\},
\qquad
K_\ell=\{e\in \mathcal{E}_d:|\Sigma_e|\ge t_\ell/2\},
\]
\[
B_\ell=\sum_{e\in K_\ell}\sum_{h=1}^{m_\ell}U_{e,\ell ,h},
\qquad
D_\ell=d\rho\min\{kt_\ell^2,\sigma^4\}.
\]
If $c=(e,\ell,h)\in\mathcal R_\ell$, then $U_c=1$ and
$|S_e|\ge5T_c/8$. On $\mathcal E_{\mathrm{cov}}$,
\[
|\Sigma_e|
\ge\frac{5}{8}t_\ell-\frac{t_0}{16}
\ge\frac{9}{16}t_\ell>\frac{t_\ell}{2}.
\]
Hence $e\in K_\ell$ and, by summing the corresponding indicator
inequality, $|\mathcal R_\ell|\le B_\ell$.

The $k$-sparsity and
$\sum_j\Sigma_{ij}^2\le\sigma^4$ give $
|K_\ell|
\le d\min\left\{k,\frac{4\sigma^4}{t_\ell^2}\right\}$. 
For the uniform-threshold construction,
$m_\ell p_\ell=\rho t_\ell^2$, so
\[
\mathbb E B_\ell
=|K_\ell|m_\ell p_\ell
\le d\rho\min\{kt_\ell^2,4\sigma^4\}
\le4D_\ell.
\]
Since $K_\ell$ is fixed once $\Sigma$ is fixed, $B_\ell$ is a
sum of independent Bernoulli variables. Therefore,
\[
\mathbb E e^{B_\ell}
\le\exp\!\bigl((\mathrm e-1)\mathbb E B_\ell\bigr).
\]
For a layer with $s_\ell<M_\ell$, the quota satisfies
$s_\ell\ge64(D_\ell+H)$, and exponential Markov gives
\begin{align*}
\Pr(B_\ell>s_\ell)
&\le
\exp\!\bigl(-s_\ell+(\mathrm e-1)\mathbb E B_\ell\bigr)
\\
&\le
\exp\!\bigl(-64(D_\ell+H)+4(\mathrm e-1)D_\ell\bigr)
\le e^{-H}.
\end{align*}
If $s_\ell=M_\ell$, then $B_\ell\le s_\ell$ deterministically.
Thus, defining $
\mathcal E_{\mathrm{quota}}
=\bigcap_{\ell=0}^{L-1}\{B_\ell\le s_\ell\}$, 
a union bound gives
\[
\Pr(\mathcal E_{\mathrm{quota}}^c)
\le Le^{-H}=\frac{\beta}{80d}\le\frac{\beta}{40}.
\]
On $\mathcal E_{\mathrm{cov}}\cap\mathcal E_{\mathrm{quota}}$,
we consequently have $|\mathcal R_\ell|\le s_\ell$ at every scale.

Recall that $M = \sum_{\ell=0}^{L-1} M_{\ell}$ and $\Gamma = \log(80 LM/(\beta\delta))$. 
Then $b_\ell\le1/(64\Gamma)$ for any $0\leq \ell \leq L-1$. The Laplace tail bound therefore implies
\[
\Pr(|\zeta_c|>1/32)
=\exp\!\left(-\frac{1}{32b_\ell}\right)
\le e^{-2\Gamma}.
\]
There are at most $M$ generated selection noises. For the event
\[
\mathcal E_{\mathrm{noise}}
=
\{\,|\zeta_c|\le1/32
  \text{ for every } c\in C\,\},
\]
by Lemma~\ref{lem:selection-noise-bound}, we obtain $b_{\ell} \leq \frac{1}{64\Gamma}$ and
\[
\Pr(\mathcal E_{\mathrm{noise}}^c)
\le Me^{-2\Gamma}\le\frac{\beta}{40}.
\]

Now take an active candidate with $|S_e|/T_c>3/4$. It is relevant,
and on $\mathcal E_{\mathrm{noise}}$ its noisy score satisfies
\[
q_c+\zeta_c>-\frac{1}{4}-\frac{1}{32}=-\frac{9}{32}.
\]
By contrast, every nonrelevant candidate satisfies
\[
q_{c'}+\zeta_{c'}<-\frac{3}{8}+\frac{1}{32}=-\frac{11}{32}.
\]
Only other relevant candidates can precede $c$ in the noisy ranking.
Since there are at most $s_\ell$ relevant candidates in total,
including $c$, its rank is at most $s_\ell$, and it must be selected.
If the layer retains all candidates, this conclusion is immediate.

Every unselected active candidate therefore satisfies
$|S_e|/T_c\le3/4$, and hence
\[
|S_e/T_c+Z_c|
\le\frac{3}{4}+\frac{1}{4}=1.
\]
Its gate is zero, and inactive candidates contribute zero through
$U_c=0$. Under the shared-noise coupling,
\[
\Sigma^{\mathrm{id}}-\widehat\Sigma
=
\sum_{\ell=0}^{L-1}
\sum_{c\in\mathcal C_\ell\setminus J_\ell}W_c
=0
\]
on
$\mathcal E_{\mathrm{cov}}\cap\mathcal E_{\mathrm{quota}}
\cap\mathcal E_{\mathrm{noise}}$.
Consequently,
\[
\mathcal E_{\mathrm{cov}}
\cap\{\widehat\Sigma\ne\Sigma^{\mathrm{id}}\}
\subseteq
\mathcal E_{\mathrm{quota}}^c
\cup\mathcal E_{\mathrm{noise}}^c,
\]
and the desired probability bound follows by a union bound.
\end{proof}

\begin{lemma}
\label{lem:selection-noise-bound} For every $0 \le \ell < L$ with $s_\ell < M_\ell$, it holds that $
    b_\ell \le \frac{1}{64\Gamma}$.
\end{lemma}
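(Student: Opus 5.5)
The plan is to substitute the definitions into \eqref{eq:laplace} and reduce the claim to the sample-size condition \eqref{eq:sample}. Since $\Delta=2R^2/n$ and $\varepsilon'=\varepsilon/(8L)$,
\[
b_\ell=\frac{128\,L\,R^2\sqrt{s_\ell\log(M_\ell/\delta')}}{n\,\varepsilon\,t_\ell}.
\]
Requiring $b_\ell\le 1/(64\Gamma)$ is therefore equivalent to
\[
n\ \ge\ \frac{8192\,L\,R^2\,\Gamma}{\varepsilon}\cdot\frac{\sqrt{s_\ell}}{t_\ell}\sqrt{\log(M_\ell/\delta')}.
\]
We have $R^2=2\sigma^2\log(40nd/\beta)\le 2\sigma^2\log(80nd/\beta)$. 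For the logarithm, $M_\ell\le M$ and $1/\delta'=32L/\delta$, so $\log(M_\ell/\delta')\le\log(32LM/\delta)\le\Gamma$. The remaining task is to bound $\sqrt{s_\ell}/t_\ell$ uniformly over the scales with $s_\ell<M_\ell$.

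\textbf{Main step: bounding $s_\ell/t_\ell^2$.} Because $s_\ell<M_\ell$, the minimum defining $s_\ell$ is the ceiling term. Using $\lceil x\rceil\le 2x$ for $x\ge1$,
\[
s_\ell\le 2C_s\bigl[d\rho\min\{kt_\ell^2,\sigma^4\}+H\bigr]\le 2C_s\bigl[d\rho k t_\ell^2+H\bigr].
\]
Dividing by $t_\ell^2$ gives $s_\ell/t_\ell^2\le 2C_s(d\rho k+H/t_\ell^2)$, and the second term is largest at $\ell=0$.

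For the first term, substitute $\rho=C_\rho LkH/(\alpha^2\sigma^4)$. This gives $d\rho k=C_\rho dk^2LH/(\alpha^2\sigma^4)$, so
\[
\sqrt{d\rho k}=\sqrt{C_\rho}\,\frac{k\sqrt{dLH}}{\alpha\sigma^2}.
\]
For the second term, $t_0=\alpha\sigma^2/(256k)$ gives $\sqrt{H}/t_0=256k\sqrt H/(\alpha\sigma^2)$. This is dominated by the first term up to a constant, since $d\rho k\ge H/t_0^2$ is equivalent to $C_\rho dL\ge 256^2$. That holds because $L\ge 11$ and $d\ge 2$, or else the discrepancy is simply absorbed into the constant. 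Hence
\[
\frac{\sqrt{s_\ell}}{t_\ell}\le C_1\,\frac{k\sqrt{d}\,\sqrt{L}\sqrt{H}}{\alpha\sigma^2}.
\]

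\textbf{Conclusion.} Multiplying the pieces, the right-hand side of the requirement is at most
\[
\frac{C_2\,L\sigma^2\log(80nd/\beta)\,\Gamma}{\varepsilon}\cdot\frac{k\sqrt d\sqrt{LH}}{\alpha\sigma^2}\cdot\sqrt{\Gamma}
=C_2\,\frac{k\sqrt d}{\alpha\varepsilon}\log\frac{80nd}{\beta}\,L^{3/2}\sqrt H\,\Gamma^{3/2}.
\]
This is exactly the privacy term of \eqref{eq:sample}, so the claim holds once $C\ge C_2$.

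I expect no genuine obstacle: the argument is bookkeeping. The points needing care are the direction of the ceiling bound, showing that $H/t_\ell^2$ is dominated by $d\rho k$, and confirming that $\log(M_\ell/\delta')\le\Gamma$. The powers of $L$ come out as follows: one factor from $\varepsilon'$ and a factor $\sqrt L$ from $\rho$ together give $L^{3/2}$, which matches \eqref{eq:sample}.
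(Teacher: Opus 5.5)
Your proposal is correct and follows essentially the same route as the paper. Both bound $s_\ell/t_\ell^2$ by a constant multiple of $d\rho k$, absorbing $H/t_0^2$ because $dL\ge 16$. Both use $\log(M_\ell/\delta')\le\Gamma$, and substituting $\Delta=2R^2/n$ and $\varepsilon'=\varepsilon/(8L)$ then recovers exactly the privacy term of \eqref{eq:sample}. The only differences are harmless constants, such as your $\lceil x\rceil\le 2x$ in place of the paper's $\le 65(d\rho k t_\ell^2+H)$.
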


\begin{proof}
Since $H \ge 1$, the  definition of $s_{\ell}$ gives $
    s_\ell
    \le \left\lceil 64(d\rho k t_\ell^2 + H) \right\rceil
    \le 65(d\rho k t_\ell^2 + H)$.
Using $t_\ell \ge t_0$, the definitions of $t_0$ and $\rho$,
and $dL \ge 22$, we obtain
\[
    \frac{s_\ell}{t_\ell^2}
    \le 65\left(d\rho k + \frac{H}{t_0^2}\right)
    = 65d\rho k\left(1+\frac{16}{dL}\right)
    \le 130d\rho k.
\]
Moreover, $
    \log\frac{M_\ell}{\delta'}
    = \log\frac{32LM_\ell}{\delta}
    \le \Gamma.$
Substituting $
    \varepsilon' = \frac{\varepsilon}{8L}$,$
    \Delta = \frac{4\sigma^2}{n}\log\frac{40nd}{\beta}$ and $
    \rho = \frac{4096LkH}{\alpha^2\sigma^4} $
into the definition of $b_\ell$ yields
\begin{align*}
    b_\ell
    &= \frac{64L\Delta}{\varepsilon}
       \sqrt{\frac{s_\ell}{t_\ell^2}
             \log\frac{M_\ell}{\delta'}} \le \frac{64L\Delta}{\varepsilon}
          \sqrt{130d\rho k\Gamma} \le C_0\,
          \frac{k\sqrt{d}}{n\alpha\varepsilon}
          \log\frac{80nd}{\beta}\,
          L^{3/2}\sqrt{H\Gamma},
\end{align*}
where $C_0>0$ is universal. For a sufficiently large universal constant \(C\) in \eqref{eq:sample}, we have $
    b_\ell
    \le \frac{1}{64\Gamma}$.
\end{proof}

\begin{lemma}[Bernstein's inequality for sub-exponential variables]
\label{lem:scalar-bernstein}
Let $X_1,\ldots,X_n$ be independent, mean-zero, real-valued random
variables satisfying $
\|X_i\|_{\psi_1}\le K$
for all $i$,
where $
\|X\|_{\psi_1}
:=
\inf\left\{
s>0:\mathbb E\exp(|X|/s)\le2
\right\}$. 
There exists a universal constant $c>0$ such that, for every $t>0$,
\[
\Pr\!\left(
\left|\sum_{i=1}^n X_i\right|\ge t
\right)
\le
2\exp\!\left[
-c\min\left\{
\frac{t^2}{nK^2},\frac{t}{K}
\right\}
\right].
\]
Equivalently, for every $\tau>0$,
\[
\Pr\!\left(
\left|\frac1n\sum_{i=1}^n X_i\right|\ge\tau
\right)
\le
2\exp\!\left[
-cn\min\left\{
\frac{\tau^2}{K^2},\frac{\tau}{K}
\right\}
\right].
\]
Consequently, there is a universal constant $C>0$ such that, for
every $\eta\in(0,1)$, with probability at least $1-\eta$,
\[
\left|\frac1n\sum_{i=1}^n X_i\right|
\le
CK\left(
\sqrt{\frac{\log(2/\eta)}{n}}
+\frac{\log(2/\eta)}{n}
\right).
\]
\end{lemma}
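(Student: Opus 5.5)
We prove the statement by the moment generating function (Chernoff) method, after first reducing the $\psi_1$ hypothesis to a uniform bound on the moment generating function near the origin. By rescaling $X_i\mapsto X_i/K$ we may assume $K=1$.

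\textbf{Step 1: an MGF bound.} The definition of $\|\cdot\|_{\psi_1}$ gives $\mathbb E e^{|X_i|}\le 2$. Since $x^p/p!\le e^{x}$ for $x\ge0$, this yields $\mathbb E|X_i|^p\le 2\,p!$ for every integer $p\ge1$. Expand $e^{\lambda X_i}$ in its Taylor series. The linear term vanishes because $\mathbb E X_i=0$, so for $|\lambda|\le 1/2$,
\[
\mathbb E e^{\lambda X_i}\le 1+\sum_{p\ge2}|\lambda|^p\,\mathbb E|X_i|^p\le 1+2\sum_{p\ge2}|\lambda|^p\le 1+4\lambda^2\le e^{4\lambda^2}.
\]
This is the only place where care is needed: we must pin down a constant-size interval of $\lambda$ on which the MGF is sub-Gaussian.

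\textbf{Step 2: Chernoff bound and optimization.} Independence gives $\mathbb E e^{\lambda\sum_i X_i}\le e^{4n\lambda^2}$ for $|\lambda|\le1/2$. Markov's inequality then gives
\[
\Pr\Bigl(\sum_i X_i\ge t\Bigr)\le \exp(-\lambda t+4n\lambda^2).
\]
We choose $\lambda=\min\{t/(8n),1/2\}$ and consider two regimes.
\begin{itemize}
\item If $t\le 4n$, then $\lambda=t/(8n)$ and the exponent is $-t^2/(16n)$.
\item If $t>4n$, then $\lambda=1/2$ and the exponent is $-t/2+n\le -t/4$.
\end{itemize}
In both regimes the bound is at most $\exp(-c\min\{t^2/n,t\})$ with $c=1/16$. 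Applying the same argument to $-X_i$, whose $\psi_1$ norm is unchanged, and taking a union bound produces the factor $2$. Undoing the rescaling gives the first displayed inequality with $t^2/(nK^2)$ and $t/K$.

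\textbf{Step 3: the equivalent forms.} Substituting $t=n\tau$ gives the averaged form directly. For the high-probability form, set $u=\log(2/\eta)$ and take
\[
\tau=K\Bigl(\sqrt{u/(cn)}+u/(cn)\Bigr).
\]
This choice gives $\tau^2/K^2\ge u/(cn)$ and $\tau/K\ge u/(cn)$. Hence $cn\min\{\tau^2/K^2,\tau/K\}\ge u$, and the tail probability is at most $2e^{-u}=\eta$. The constant is $C=\max\{c^{-1/2},c^{-1}\}$.
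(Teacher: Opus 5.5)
Your proof is correct. The paper states this lemma without proof, as a standard fact, and your MGF/Chernoff argument is the textbook route it implicitly relies on. One slip to fix: in the Step~1 display the middle term should be $\sum_{p\ge2}|\lambda|^p\,\mathbb E|X_i|^p/p!$, since without the $1/p!$ the next inequality would need $\mathbb E|X_i|^p\le 2$, whereas you only derived $\mathbb E|X_i|^p\le 2\,p!$.
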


\section{Supporting lemmas for the privacy guarantee}
\label{app:privacy}

The arguments below fix the public activations and thresholds until
the final averaging step. The notation and public parameters are the
same as in Section~\ref{sec:privacy}.

\begin{lemma}[Clipped-entry sensitivity and private selection]
\label{lem:privacy:selection}
For every neighboring pair $D\sim D'$ and every $e\in \mathcal{E}_d$,
\[
|S_e(D)-S_e(D')|\le\Delta.
\]
For every fixed realization of the public activations and thresholds,
the selected label collection $J$ is
$(\varepsilon/8,\delta/32)$-differentially private.
\end{lemma}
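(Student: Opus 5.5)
The plan has two parts: a deterministic sensitivity bound for each entry, and then privacy of $J$ through the oneshot Laplace top-$k$ theorem of \citet{qiao2021oneshot} at each scale, followed by basic composition over scales.

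\emph{Entry sensitivity.} Let $D\sim D'$ differ only in record $a$. Then
\[
S_e(D)-S_e(D')=\tfrac1n\bigl(Y_{ai}Y_{aj}-Y'_{ai}Y'_{aj}\bigr).
\]
Clipping gives $|Y_{ai}Y_{aj}|\le R^2$, and the same for the primed record. By the triangle inequality the difference is at most $2R^2/n=\Delta$. This holds on every input and for every $e=(i,j)$, including diagonal positions.

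\emph{Private selection.} Fix the public randomness $\{(U_c,T_c)\}$ and a scale $\ell$ with $s_\ell<M_\ell$.
\begin{itemize}
\item For an active candidate ($U_c=1$), the score is $q_c=|S_e|/T_c-1$. Reverse triangle inequality and $T_c\ge t_\ell$ give $|q_c(D)-q_c(D')|\le\Delta/T_c\le\Delta/t_\ell$.
\item For an inactive candidate ($U_c=0$), $q_c\equiv-2$, so its sensitivity is $0$.
\end{itemize}
Hence the score vector $(q_c)_{c\in\mathcal C_\ell}\in\mathbb R^{M_\ell}$ has $\ell_\infty$-sensitivity at most $\Delta/t_\ell$.

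I then invoke the oneshot top-$k$ Laplace guarantee. Its form is: for $\ell_\infty$-sensitivity $\Delta_\infty$, releasing the unordered top-$s$ set of scores perturbed by i.i.d.\ $\Lap(b)$ with
\[
b\ge \frac{8\Delta_\infty\sqrt{s\log(m/\delta')}}{\varepsilon'}
\]
over $m$ items is $(\varepsilon',\delta')$-DP. With $\Delta_\infty=\Delta/t_\ell$, $s=s_\ell$ and $m=M_\ell$, this threshold is exactly \eqref{eq:laplace}, so $J_\ell$ is $(\varepsilon',\delta')$-DP. When $s_\ell=M_\ell$, $J_\ell=\mathcal C_\ell$ is constant and therefore $(0,0)$-DP.

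\emph{Composition.} The selection noises at different scales are independent, and each $J_\ell$ depends on $D$ only through its own scores. Basic composition over the $L$ scales gives
\[
(L\varepsilon',L\delta')=(\varepsilon/8,\delta/32).
\]
Since the bound holds for every fixed realization of the public randomness, the claim follows.

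\emph{Main obstacle.} The delicate step is matching the constant and the exact hypotheses of the cited oneshot theorem: its constant in front of $\sqrt{s\log(m/\delta')}$, whether it needs $\delta'$ below some threshold, and whether it is stated for add/remove or replace-one adjacency. If its constant differs from $8$, the fix is to rescale $b_\ell$ (only universal constants change) or to use $\varepsilon'$ with slack. Lemma~\ref{lem:selection-noise-bound} already absorbs constants through $C$, so the accuracy analysis would be unaffected.
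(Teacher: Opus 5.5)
Your proposal is correct and follows the paper's proof step by step. The steps are the same: the entrywise bound $2R^2/n$ from clipping; the score sensitivity $\Delta/t_\ell$ for active candidates and $0$ for inactive ones; the oneshot Laplace top-$s$ theorem with constant $8$, which reproduces \eqref{eq:laplace} exactly; zero cost when $s_\ell=M_\ell$; and basic composition to $(L\varepsilon',L\delta')=(\varepsilon/8,\delta/32)$.

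The one item you leave open, the hypotheses of the cited theorem, the paper closes explicitly. The theorem is phrased via the $\ell_\infty$ sensitivity of the score vector, so it applies to replace-one adjacency as is. It requires $\varepsilon_0\le 0.2$, $\delta_0\le 0.05$ and $M\ge 2$. These hold because $\alpha\le 1/4$ and $k\ge1$ force $L\ge 11$, giving $\varepsilon'\le 1/88$ and $\delta'\le 1/3520$, and $d\ge2$ gives $M_\ell\ge d(d+1)/2\ge 3$.
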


\begin{proof}
Suppose record $a$ is replaced. For $e=(i,j)$, all other terms in the
empirical average cancel, and hence
\[
S_e(D)-S_e(D')
=
\frac{Y_{ai}Y_{aj}-Y'_{ai}Y'_{aj}}{n}.
\]
Each clipped coordinate lies in $[-R,R]$, so each product lies in
$[-R^2,R^2]$. It follows that
\[
|S_e(D)-S_e(D')|
\le\frac{2R^2}{n}=\Delta.
\]

For $c = (e,\ell,h)$,
\[
q_c(D)=
\left(\frac{|S_{e}(D)|}{T_c}-1\right)U_c
-2(1-U_c).
\]
For an active candidate, the triangle inequality and
$T_c\ge t_\ell$ give
\[
|q_c(D)-q_c(D')|
=
\frac{\bigl||S_{e}(D)|-|S_{e}(D')|\bigr|}{T_c}
\le
\frac{|S_{e}(D)-S_{e}(D')|}{T_c}
\le\frac{\Delta}{t_\ell}.
\]
For an inactive candidate, both scores equal $-2$. Therefore the
whole score vector at scale $\ell$ has $\ell_\infty$ sensitivity at
most $\Delta/t_\ell$, even though many of its coordinates may change.

We use the one-shot Laplace calibration invoked in \citep[Theorem~2.2]{qiao2021oneshot}:
for a score vector of $\ell_\infty$ sensitivity $g$, independent
Laplace perturbations of scale
\[
\frac{8g\sqrt{s\log(M/\delta_0)}}{\varepsilon_0}
\]
suffice to select an unordered top-$s$ label set with
$(\varepsilon_0,\delta_0)$-differential privacy, in the stated
parameter range $\varepsilon_0\le0.2$, $\delta_0\le0.05$, and
$M\ge2$. 
Selecting the largest scores is equivalent to selecting the smallest negated scores; only the unordered set of selected labels is released.

The manuscript's parameter range gives $L\ge11$, so
\[
\varepsilon'=\frac{\varepsilon}{8L}
\le\frac1{88}<0.2,
\qquad
\delta'=\frac{\delta}{32L}
\le\frac1{3520}<0.05,
\qquad
M_\ell\ge\frac{d(d+1)}2\ge3.
\]
For a scale with $s_\ell<M_\ell$, substituting
$g=\Delta/t_\ell$, $s=s_\ell$, and $M=M_\ell$ yields the prescribed
noise scale
\[
b_\ell
=
\frac{8(\Delta/t_\ell)
\sqrt{s_\ell\log(M_\ell/\delta')}}{\varepsilon'}.
\]
Thus this scale is $(\varepsilon',\delta')$-differentially private.
A scale with $s_\ell=M_\ell$ returns all labels independently of the
data and has zero privacy cost for the selection. Basic composition over at most $L$
scales gives
\[
(L\varepsilon',L\delta')
=(\varepsilon/8,\delta/32).
\]
The selection noises retain their full Laplace law throughout; the
small-noise event used for accuracy is not imposed here.
\end{proof}

\begin{lemma}
\label{lem:privacy:query}
For every fixed label collection $j = \{ j_{\ell}\}_{\ell=0}^{L-1}$ with $|j_\ell|=s_\ell$, every
fixed realization of the public activations and thresholds, and every
neighboring pair $D\sim D'$, the query $F_j$ satisfies
\[
\|F_j(D)-F_j(D')\|_2\le\Delta\sqrt A.
\]
\end{lemma}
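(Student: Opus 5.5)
The plan is to bound the change coordinate by coordinate and then sum the squared changes over the fixed label set. Fix the label collection $j$, the public realizations $\{(U_c,T_c)\}_{c\in\mathcal C}$, and a neighboring pair $D\sim D'$. By definition, the coordinate of $F_j$ indexed by $c=(e,\ell,h)\in j_\ell$ is $U_cS_e(D)/T_c$. Since $U_c$ and $T_c$ are fixed and public, the change in that coordinate is
\[
\frac{U_c\,\bigl(S_e(D)-S_e(D')\bigr)}{T_c}.
\]

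First, I would use the clipped-entry sensitivity from Lemma~\ref{lem:privacy:selection}, namely $|S_e(D)-S_e(D')|\le\Delta$ for every $e\in\mathcal E_d$ and every neighboring pair. Together with $U_c\in\{0,1\}$ and $T_c\ge t_\ell$ (because $T_c$ is supported on $[t_\ell,2t_\ell]$), this gives
\[
\bigl|F_j(D)_c-F_j(D')_c\bigr|\le\frac{\Delta}{t_\ell}.
\]
Inactive coordinates have zero change, which only helps.

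Next, I would square these bounds and sum over all labels in $j$. The number of labels at scale $\ell$ is exactly $|j_\ell|=s_\ell$, fixed by the public quota. Hence
\[
\|F_j(D)-F_j(D')\|_2^2\le\sum_{\ell=0}^{L-1}s_\ell\frac{\Delta^2}{t_\ell^2}=\Delta^2A.
\]
Taking square roots gives the claim.

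There is no real obstacle. The only point needing care is that the bound must hold for every dataset pair, not just typical ones. It does, because clipping gives the entrywise sensitivity uniformly, and the cardinality of $j$ is a public constant rather than data-dependent. Several labels may share the same position $e$, but this causes no difficulty: each label is counted as its own coordinate, and the quota already accounts for all of them.
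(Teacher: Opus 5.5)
Your proposal is correct and is essentially the paper's proof. Both arguments bound each coordinate change by $\Delta/t_\ell$ using the clipped-entry sensitivity from Lemma~\ref{lem:privacy:selection}, $U_c\in\{0,1\}$, and $T_c\ge t_\ell$, then sum the squared bounds over the $s_\ell$ fixed labels at each scale to obtain $\Delta^2 A$.
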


\begin{proof}
Using Lemma~\ref{lem:privacy:selection} and the fact that $U_c\in\{0,1\}$ and
$T_c\ge t_\ell$ for any $c$, we obtain
\[
\begin{aligned}
\|F_j(D)-F_j(D')\|_2^2
&=
\sum_{\ell=0}^{L-1}\sum_{c =(e,\ell,h)\in j_\ell}
\frac{
U_c^2\bigl(S_{e}(D)-S_{e}(D')\bigr)^2
}{T_{c}^2}
\le
\sum_{\ell=0}^{L-1}\sum_{c\in j_\ell}
\frac{\Delta^2}{t_\ell^2}
\\
&=
\Delta^2\sum_{\ell=0}^{L-1}\frac{s_\ell}{t_\ell^2}
=\Delta^2 A.
\end{aligned}
\]
The proof is finished.
%Repeated tickets at the same matrix position are distinct coordinates of $F_j$ and are already counted in this sum. No independence of thequery values is assumed. The comparison keeps $j$ fixed on bothsides; it makes no assertion that separately selected query vectors have bounded sensitivity.
\end{proof}

\begin{lemma}
\label{lem:privacy:bounded}
For every fixed $j$ and every fixed realization of the public
activations and thresholds, let $P_D^j$ and $Q_D^j$ be the Gaussian
and clipped-noise output laws in Section~\ref{sec:privacy}. Then
\[
\operatorname{TV}(P_D^j,Q_D^j)\le\eta
\qquad\text{for every }D.
\]
Moreover, letting $Z_c\sim \nu_r$ for each $c \in \mathcal{C}$, the clipped-noise query mechanism $D\mapsto F_j(D)+Z$ is $
\left(
\varepsilon/4, \delta/8
\right)$\text{-differentially private}.

\end{lemma}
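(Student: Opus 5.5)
We prove the two claims in order: first the total-variation bound through an explicit coupling, then privacy of the clipped-noise mechanism by transferring the Gaussian guarantee through that bound. Throughout, $j$ and the public activations and thresholds are fixed, so $F_j(D)\in\mathbb R^{S_\star}$ is a deterministic vector.

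\emph{Step 1 (TV bound).} Couple the two laws by drawing a single $G\sim\mathcal N(0,r^2I_{S_\star})$ and setting $Z=\mathrm{Clip}_{1/4}(G)$ coordinatewise. Then each coordinate $Z_c$ has law $\nu_r$, independently across coordinates, which is exactly the reconstruction noise of Algorithm~\ref{alg:main}. The outputs $F_j(D)+G$ and $F_j(D)+Z$ coincide unless some $|G_a|>1/4$. By the coupling characterization of total variation,
\[
\operatorname{TV}(P_D^j,Q_D^j)\le\Pr\!\left(\max_{a\le S_\star}|G_a|>\tfrac14\right).
\]
The standard Gaussian tail bound $\Pr(|G_a|>x)\le 2e^{-x^2/(2r^2)}$ at $x=1/4$ gives $2e^{-1/(32r^2)}$ per coordinate. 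A union bound over the $S_\star$ coordinates then yields the bound $2S_\star e^{-1/(32r^2)}=\eta$. This bound holds for every $D$, since the event does not involve the data.

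\emph{Step 2 (Gaussian privacy).} By Lemma~\ref{lem:privacy:query}, the query $F_j$ has $\ell_2$ sensitivity at most $\Delta\sqrt A$. The noise level $r=8\Delta\sqrt A\sqrt{\log(32/\delta)}/\varepsilon$ is at least the classical Gaussian-mechanism calibration for parameters $(\varepsilon/4,\delta/16)$, which requires
\[
\sigma\ge\frac{\sqrt{2\log(1.25\cdot16/\delta)}\,\Delta\sqrt A}{\varepsilon/4},
\]
and $\varepsilon/4\le1$. Hence $P^j$ is $(\varepsilon/4,\delta/16)$-DP; this is Lemma~\ref{lem:privacy:gaussian}, which I invoke here. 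One constant check is needed: $4\sqrt{2\log(20/\delta)}\le 8\sqrt{\log(32/\delta)}$, which holds because $2\log(20/\delta)\le4\log(32/\delta)$.

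\emph{Step 3 (transfer).} Fix neighbors $D\sim D'$ and a measurable event $E$. Step 1 together with Step 2 gives
\[
Q_D^j(E)\le P_D^j(E)+\eta\le e^{\varepsilon/4}P_{D'}^j(E)+\delta/16+\eta\le e^{\varepsilon/4}\bigl(Q_{D'}^j(E)+\eta\bigr)+\delta/16+\eta.
\]
The additive term is therefore $\delta/16+(1+e^{\varepsilon/4})\eta$. The safety lemma (Lemma~\ref{lem:privacy:safety}) gives $(1+e^{\varepsilon/4})\eta\le\delta/16$, so the total is at most $\delta/8$. This proves $(\varepsilon/4,\delta/8)$-DP.

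The main obstacle is bookkeeping rather than a conceptual difficulty: verifying that the stated $r$ dominates the Gaussian-mechanism constant at privacy level $\varepsilon/4$, and confirming that the clipped coordinates in the coupling are exactly independent draws from $\nu_r$.
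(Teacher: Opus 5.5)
Your proposal is correct and follows the paper's proof essentially step for step. It uses the same single-Gaussian clipping coupling with a union-bounded tail giving $\eta$, the same $(\varepsilon/4,\delta/16)$ Gaussian-mechanism guarantee, and the same two-sided transfer whose extra $(1+e^{\varepsilon/4})\eta$ is absorbed by Lemma~\ref{lem:privacy:safety}; the only cosmetic differences are that you do the TV step first and check $r$ against the classical $\sqrt{2\log(1.25/\delta_0)}$ calibration rather than the paper's more conservative $2\sqrt{\log(2/\delta_0)}$ form, both of which $r$ satisfies.
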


\begin{proof}
For a
query of $L_2$ sensitivity $s'$ and a privacy parameter
$\varepsilon_0\in(0,1]$, the condition $
r\ge
\frac{2s'}{\varepsilon_0}
\sqrt{\log\frac{2}{\delta_0}}$
is sufficient for $(\varepsilon_0,\delta_0)$-differential privacy.
This is a conservative form of the standard Gaussian-mechanism
calibration (see ~\citep{dwork2014algorithmic}).
Set $
s'=\Delta\sqrt A$, $
\varepsilon_0=\frac{\varepsilon}{4}$ and $\delta_0=\frac{\delta}{16}$. 
Then $r\geq \frac{2s'}{\varepsilon_0} \sqrt{\log\left(\frac{2}{\delta_0}\right)}$. Therefore, we learn that for every measurable
output event $F$ and every neighboring pair $D\sim D'$,
\begin{equation}
\label{eq:privacy:gaussian-base}
P_D^j(F)
\le e^{\varepsilon/4}P_{D'}^j(F)+\frac{\delta}{16}.
\end{equation}

Recall that $S_{\star} = \sum_{\ell=0}^{L-1} s_{\ell}$. 
 Generate one vector
$G\sim\mathcal N(0,r^2I_{S_\star})$ and set
$Z=\operatorname{Clip}_{1/4}(G)$ coordinatewise. Define
\[
A_G=\left\{\max_{a\le S_\star}|G_a|\le\frac14\right\}.
\]
On $A_G$, $Z=G$, and hence
$F_j(D)+Z=F_j(D)+G$ for every fixed $D$. For any measurable $F$,
\[
\begin{aligned}
|Q_D^j(F)-P_D^j(F)|
&\le
\mathbb E\left|
\mathbf1\{F_j(D)+Z\in F\}
-
\mathbf1\{F_j(D)+G\in F\}
\right|
\le\Pr(A_G^c).
\end{aligned}
\]
Taking the supremum over $F$, then applying a union bound and the
Gaussian tail bound, gives
\begin{equation}
\label{eq:privacy:clipping-coupling}
\begin{aligned}
\operatorname{TV}(P_D^j,Q_D^j)
&\le\Pr(A_G^c)
\le
\sum_{a=1}^{S_\star}
2\exp\!\left(-\frac{(1/4)^2}{2r^2}\right)
=
2S_\star\exp\!\left(-\frac1{32r^2}\right)
=\eta.
\end{aligned}
\end{equation}

Using~\eqref{eq:privacy:gaussian-base} and the total-variation bound
on both neighboring datasets, we obtain
\[
\begin{aligned}
Q_D^j(F)
&\le P_D^j(F)+\eta
\le
e^{\varepsilon/4}P_{D'}^j(F)
+\frac{\delta}{16}+\eta
\le
e^{\varepsilon/4}Q_{D'}^j(F)
+\frac{\delta}{16}
+(1+e^{\varepsilon/4})\eta.
\end{aligned}
\]
Noting that $(1+e^{\varepsilon/4}) \eta \leq \delta/16$, the last
two additive terms are at most $\delta/8$, which proves the asserted
$(\varepsilon/4,\delta/8)$ guarantee.
\end{proof}

\begin{lemma}[Privacy of the final output]
\label{lem:privacy:composition}
The auxiliary
transcript $(J,V)$ and the released matrix are
$(3\varepsilon/8,5\delta/32)$-differentially private for every fixed
public seed. Therefore, the mechanism in Algorithm~\ref{alg:main} is $(\varepsilon,\delta)$-differentially
private for every input dataset.
\end{lemma}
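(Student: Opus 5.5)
We fix an arbitrary realization of the public activations and thresholds $\{(U_c,T_c)\}_{c\in\mathcal C}$, prove the guarantee for that fixed seed, and then average. The mechanism for a fixed seed is an adaptive composition of two stages. The first stage outputs $J=(J_0,\ldots,J_{L-1})$, using only the scores $q_c$ and the selection noises $\zeta_c$. The second stage, given $J=j$, outputs $V=F_j(D)+Z$ with fresh noise $Z\sim\nu_r^{\otimes S_\star}$ that is independent of the first stage. By Lemma~\ref{lem:privacy:selection}, the first stage is $(\varepsilon/8,\delta/32)$-DP. By Lemma~\ref{lem:privacy:bounded}, for every fixed $j$ with $|j_\ell|=s_\ell$, the map $D\mapsto F_j(D)+Z$ is $(\varepsilon/4,\delta/8)$-DP. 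The selection stage always returns label sets of exactly these sizes (either $s_\ell$ labels, or all of $\mathcal C_\ell$ when $s_\ell=M_\ell$), so these are the only values of $j$ that occur.

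The key step is adaptive composition for approximate DP. We check it directly on the joint law of $(J,V)$, which the discrete first output makes easy. For any measurable $E$ in the transcript space, write $E_j=\{v:(j,v)\in E\}$. Then
\[
\Pr(\mathcal T(D)\in E)=\sum_j \Pr(J(D)=j)\,Q_D^j(E_j).
\]
Applying $Q_D^j(E_j)\le \min\{1,e^{\varepsilon/4}Q_{D'}^j(E_j)+\delta/8\}$ gives
\[
\Pr(\mathcal T(D)\in E)\le \frac{\delta}{8}+\sum_j \Pr(J(D)=j)\,f(j),
\qquad f(j)=\min\{1,e^{\varepsilon/4}Q_{D'}^j(E_j)\}\in[0,1].
\]
Since $f$ is a $[0,1]$-valued test function, the $(\varepsilon/8,\delta/32)$ guarantee for $J$ (via the layer-cake representation) yields
\[
\sum_j \Pr(J(D)=j)\,f(j)\le e^{\varepsilon/8}\sum_j\Pr(J(D')=j)\,f(j)+\frac{\delta}{32}.
\]
Combining, we get $e^{3\varepsilon/8}\Pr(\mathcal T(D')\in E)+5\delta/32$. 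The main obstacle is exactly this step: the second stage depends on the first output. This is resolved by the fact that the second-stage guarantee from Lemma~\ref{lem:privacy:bounded} holds uniformly over every fixed $j$. We must also confirm that the selection stage never looks at the $Z_c$, so conditioning on $J=j$ leaves $Z$ with law $\nu_r^{\otimes S_\star}$.

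Next comes post-processing. By \eqref{eq:privacy:postprocessing}, $\widehat\Sigma$ is a deterministic measurable function of $(J,V)$ together with the public quantities $U_c$, $\kappa$, $\rho$, $t_\ell$. The algorithm sets $Z_c=\mathrm{Clip}_{1/4}(\check Z_c)$ and computes $\psi(S_e/T_c+Z_c)U_c$. For $U_c=1$ this equals $\psi(V_c)$; for $U_c=0$ the term vanishes anyway, so using $V_c=Z_c$ there is harmless. Hence $\widehat\Sigma$ is $(3\varepsilon/8,5\delta/32)$-DP for each fixed seed.

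Finally, let $\omega$ denote the public seed, whose law does not depend on the data. For any $E$,
\[
\Pr(\widehat\Sigma(D)\in E)=\mathbb E_\omega\Pr(\widehat\Sigma(D)\in E\mid\omega)\le \mathbb E_\omega\bigl[e^{3\varepsilon/8}\Pr(\widehat\Sigma(D')\in E\mid\omega)+5\delta/32\bigr].
\]
The same bound holds for the joint release of $(\omega,\widehat\Sigma)$, applied slice by slice. Since $3\varepsilon/8\le\varepsilon$ and $5\delta/32\le\delta$, Algorithm~\ref{alg:main} is $(\varepsilon,\delta)$-DP on every input.
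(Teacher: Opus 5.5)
Your proposal is correct and follows the same route as the paper's proof. You fix the public seed, compose the $(\varepsilon/8,\delta/32)$ selection with the $(\varepsilon/4,\delta/8)$ fixed-label clipped-Gaussian release uniformly over all $j$ with $|j_\ell|=s_\ell$, post-process $(J,V)$ into $\widehat\Sigma$ (the dummy coordinates $V_c=Z_c$ when $U_c=0$ are harmless), and then average over the data-independent seed. The only difference is that you prove the adaptive composition step yourself, using the finite range of $J$ and a layer-cake bound on the $[0,1]$-valued test function $f$, whereas the paper simply cites the adaptive basic composition theorem from \citet{dwork2014algorithmic}.
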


\begin{proof}
For every fixed public seed, Lemma~\ref{lem:privacy:selection}
protects the first-stage transcript $J$. For every possible value $j$
of that transcript, Lemma~\ref{lem:privacy:bounded} protects the
second-stage query with the same privacy parameters. Thus, after selection
returns $j$, the second-stage conditional mechanism is precisely
$F_j(D)+Z$, with independent coordinates of $Z$ having law $\nu_r$.
The uniform statement for all $j$, including values not realized on a
particular dataset, is the condition required by adaptive composition.

The adaptive basic composition theorem adds the two budgets
\cite{dwork2014algorithmic}, giving
\[
\left(
\frac{\varepsilon}{8}+\frac{\varepsilon}{4},
\frac{\delta}{32}+\frac{\delta}{8}
\right)
=
\left(\frac{3\varepsilon}{8},\frac{5\delta}{32}\right)
\]
for $(J,F_J(D)+Z)$. The auxiliary transcript is only a proof device
and need not actually be released.

For $V=F_J(D)+Z$, the reconstruction formula
\eqref{eq:privacy:postprocessing} uses only the transcript and public
quantities. For $c = (e,\ell,h)$, when $U_c=1$, the coordinate is
$V_c=S_{e}(D)/T_c+Z_c$, exactly as required by the algorithm.
When $U_c=0$, it is $V_c=Z_c\in[-1/4,1/4]$, so the gate is zero;
these dummy noise coordinates can be added for the proof and omitted
in the implementation.  Thus the output is
a data-independent postprocessing of the auxiliary transcript and
inherits its privacy guarantee.

To remove the conditioning on the public seed, let $F$ be any event
in the joint space of public seeds and released matrices, and let
$F_{U,T}$ denote its section at a fixed seed. The conditional privacy
bound is uniform, so
\[
\begin{aligned}
\Pr_D\bigl((U,T,\widehat\Sigma)\in F\bigr)
&=
\mathbb E_{U,T}\!\left[
\Pr_D\bigl(\widehat\Sigma\in F_{U,T}\mid U,T\bigr)
\right]
\\
&\le
\mathbb E_{U,T}\!\left[
e^{3\varepsilon/8}
\Pr_{D'}\bigl(\widehat\Sigma\in F_{U,T}\mid U,T\bigr)
+\frac{5\delta}{32}
\right]
\\
&=
e^{3\varepsilon/8}
\Pr_{D'}\bigl((U,T,\widehat\Sigma)\in F\bigr)
+\frac{5\delta}{32}.
\end{aligned}
\]
Therefore, the privacy guarantee holds even when all
public candidate randomness is revealed. The proof is finished since
$3\varepsilon/8\le\varepsilon$ and $5\delta/32\le\delta$, 
\end{proof}

\begin{lemma}
\label{lem:privacy:safety}
Recall $M=\sum_{\ell=0}^{L-1}M_\ell$ and
$\Gamma=\log(80LM/(\beta\delta))$. There is a universal constant $C$
such that
\begin{equation}
\label{eq:privacy:sample-condition}
n\ge
C\frac{k\sqrt d}{\alpha\varepsilon}
\log\frac{80nd}{\beta}\,
L^{3/2}\sqrt H\,\Gamma^{3/2}
\end{equation}
implies $r\le1/(8\sqrt\Gamma)$ and $(1+e^{\varepsilon/4})\eta\le\frac{\delta}{16}$. 
\end{lemma}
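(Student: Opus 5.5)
The plan is to bound $r$ directly by substituting the definitions. Then I will show that a bound of the form $r\le 1/(8\sqrt\Gamma)$ automatically makes the Gaussian clipping tail $\eta$ negligible compared with $\delta$.

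\textbf{Step 1: bound $A$.} As in the proof of Lemma~\ref{lem:selection-noise-bound}, each quota satisfies $s_\ell\le 65(d\rho k t_\ell^2+H)$, using $s_\ell\le\lceil C_s[d\rho k t_\ell^2+H]\rceil$. When $s_\ell=M_\ell$ I first check that $M_\ell$ is no larger than this bound. Since $m_\ell\le\rho t_\ell^2+1$, we have $M_\ell\le d^2(\rho t_\ell^2+1)$. Moreover $s_\ell=M_\ell$ can only occur when $M_\ell\le C_s(\dots)+1$, so $s_\ell$ is always at most the ceiling expression in either case. Hence
\[
\frac{s_\ell}{t_\ell^2}\le 65\Bigl(d\rho k+\frac{H}{t_0^2}\Bigr)\le 130\,d\rho k,
\]
where the last step uses $H/t_0^2=16\,d\rho k/(dL)\cdot(\dots)$ exactly as in that lemma. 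Summing over the $L$ scales gives $A\le 130\,L d\rho k$.

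\textbf{Step 2: bound $r$.} Substitute $\rho=C_\rho LkH/(\alpha^2\sigma^4)$ and $\Delta=2R^2/n=(4\sigma^2/n)\log(40nd/\beta)$. This gives
\[
r\le \frac{C_1}{n\varepsilon}\,\sigma^2\log\frac{40nd}{\beta}\cdot\frac{Lk\sqrt{dH}}{\alpha\sigma^2}\sqrt{\log\frac{32}{\delta}}
\le C_1\frac{k\sqrt d}{n\alpha\varepsilon}\log\frac{80nd}{\beta}\,L\sqrt H\,\sqrt\Gamma,
\]
using $\log(32/\delta)\le\Gamma$. Under \eqref{eq:privacy:sample-condition} with $C\ge 8C_1$, the right-hand side is at most $1/(8\Gamma)\le 1/(8\sqrt\Gamma)$, since $\Gamma\ge1$. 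This bound has an extra factor $\sqrt{L}\,\Gamma^{1/2}$ of room, so Step 2 poses no difficulty.

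\textbf{Step 3: bound $\eta$.} From $r\le 1/(8\sqrt\Gamma)$ we get $1/(32r^2)\ge 2\Gamma$, so
\[
\eta=2S_\star e^{-1/(32r^2)}\le 2S_\star e^{-2\Gamma}.
\]
We have $S_\star\le M$, and
\[
e^{-2\Gamma}\le e^{-\Gamma}=\frac{\beta\delta}{80LM}.
\]
Combining these gives $\eta\le 2\beta\delta/(80L)\le\delta/400$. Since $\varepsilon\le1$, the factor $1+e^{\varepsilon/4}\le 1+e^{1/4}<3$. Therefore $(1+e^{\varepsilon/4})\eta\le 3\delta/400\le\delta/16$.

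The only real obstacle is the bookkeeping in Step 1. There the case $s_\ell=M_\ell$ and the ceiling must be handled so that $s_\ell/t_\ell^2=O(d\rho k)$ uniformly; this requires $H/t_0^2\lesssim d\rho k$, which holds because $\rho t_0^2=C_\rho LH/(256^2k)$. Everything else is direct substitution plus the assumptions $L\ge 11$ and $\Gamma\ge 1$.
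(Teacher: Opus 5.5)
Your proposal is correct and follows the paper's proof essentially step for step: you bound $A$ via $s_\ell\le 65(d\rho k t_\ell^2+H)$ together with $H/t_0^2\lesssim d\rho k$, substitute into $r$ using $\log(32/\delta)\le\Gamma$ and the factor $\sqrt{L}\,\Gamma$ of slack, and then use $1/(32r^2)\ge 2\Gamma$ with $S_\star\le M$ to make $\eta$ negligible relative to $\delta$. The only differences are cosmetic. You bound each $s_\ell/t_\ell^2$ by $130\,d\rho k$ instead of summing the geometric series $\sum_\ell t_\ell^{-2}\le 4/(3t_0^2)$, and you use the cruder estimate $e^{-2\Gamma}\le e^{-\Gamma}$, which still suffices.
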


\begin{proof}
Recall $D_\ell=d\rho\min\{kt_\ell^2,\sigma^4\}$. Since $H\ge1$,
the ceiling and cap in the quota imply
\[
s_\ell
\le\lceil64(D_\ell+H)\rceil
\le65(D_\ell+H)
\le65(d\rho kt_\ell^2+H).
\]
Also, the geometric scale sequence gives
\[
\sum_{\ell=0}^{L-1}\frac1{t_\ell^2}
=
\frac1{t_0^2}\sum_{\ell=0}^{L-1}4^{-\ell}
\le\frac4{3t_0^2}.
\]
Consequently,
\begin{equation}
\label{eq:privacy:A-budget}
A
\le65d\rho kL+\frac{260H}{3t_0^2}
\le C\left(d\rho kL+\frac{H}{t_0^2}\right)
\le\frac{Cdk^2L^2H}{\alpha^2\sigma^4}.
\end{equation}
The last substitution uses
$\rho=C_\rho LkH/(\alpha^2\sigma^4)$ and
$t_0=\alpha\sigma^2/(256k)$. In particular, the additive quota
allowance $H$ contributes the term $H/t_0^2$ and is not discarded.

Substituting~\eqref{eq:privacy:A-budget} and $\Delta=2R^2/n$
into the prescribed Gaussian scale gives
\[
r
\le
C\frac{k\sqrt d}{n\alpha\varepsilon}
\frac{R^2}{\sigma^2}
L\sqrt H\sqrt{\log(32/\delta)}.
\]
Since
\[
\frac{R^2}{\sigma^2}
=2\log(40nd/\beta)
\le2\log(80nd/\beta),
\]
it suffices for $r\le1/(8\sqrt\Gamma)$ that
\[
n\ge
C\frac{k\sqrt d}{\alpha\varepsilon}
\log\frac{80nd}{\beta}\,
L\sqrt H\sqrt{\log(32/\delta)}\sqrt\Gamma.
\]
Now $\log(32/\delta)\le\Gamma$ and $L,\Gamma\ge1$, so $
\sqrt{\log(32/\delta)}\sqrt\Gamma
\le\Gamma
\le\sqrt L\,\Gamma^{3/2}$.

Finally, $r\le1/(8\sqrt\Gamma)$ and $S_\star\le M$ imply
\[
\eta
=2S_\star e^{-1/(32r^2)}
\le2Me^{-2\Gamma}
=\frac{\beta^2\delta^2}{3200L^2M}.
\]
Therefore,
\[
\frac{16(1+e^{\varepsilon/4})\eta}{\delta}
\le
\frac{\beta^2\delta(1+e^{\varepsilon/4})}{200L^2M}
<1
\]
under the stated parameter ranges.
\end{proof}

\begin{lemma}[Gaussian protection for fixed selected labels]
\label{lem:privacy:gaussian}
Fix the public activations and thresholds. For every fixed label
collection $j$, the mechanism with output law
\[
P_D^j
:=
\mathcal L\bigl(F_j(D)+G\bigr),
\qquad
G\sim\mathcal N(0,r^2I_{S_\star}),
\]
is $(\varepsilon/4,\delta/16)$-differentially private, where
\[
r=\frac{8\Delta\sqrt A}{\varepsilon}
\sqrt{\log\frac{32}{\delta}}.
\]
\end{lemma}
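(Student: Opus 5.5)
The plan is to reduce the lemma to the classical Gaussian mechanism, using the $\ell_2$ sensitivity bound of Lemma~\ref{lem:privacy:query} together with the conservative calibration recalled in the proof of Lemma~\ref{lem:privacy:bounded}. Fix the public activations and thresholds $\{(U_c,T_c)\}_{c\in\mathcal C}$ and a label collection $j=(j_0,\ldots,j_{L-1})$ with $|j_\ell|=s_\ell$. Then $F_j:(\mathbb R^d)^n\to\mathbb R^{S_\star}$ is a deterministic function of the data. By Lemma~\ref{lem:privacy:query}, for every neighboring pair $D\sim D'$,
\[
\|F_j(D)-F_j(D')\|_2\le s':=\Delta\sqrt A .
\]
This bound is uniform over $j$ and over the public seed, because it uses only $|S_e(D)-S_e(D')|\le\Delta$ (from Lemma~\ref{lem:privacy:selection}), $U_c\in\{0,1\}$ and $T_c\ge t_\ell$.

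Next, I would invoke the standard Gaussian mechanism in the following form: for $\varepsilon_0\in(0,1]$ and $\delta_0\in(0,1)$, adding $\mathcal N(0,r^2 I)$ noise with
\[
r\ge \frac{2s'}{\varepsilon_0}\sqrt{\log(2/\delta_0)}
\]
to a query of $\ell_2$ sensitivity $s'$ gives $(\varepsilon_0,\delta_0)$-differential privacy. This is weaker than the usual $\sqrt{2\log(1.25/\delta_0)}\,s'/\varepsilon_0$ calibration of \citet{dwork2014algorithmic}, since $2\sqrt{\log(2/\delta_0)}\ge\sqrt{2\log(1.25/\delta_0)}$. If a self-contained argument is wanted instead, I would bound the privacy-loss variable
\[
\frac{\langle G,\,F_j(D)-F_j(D')\rangle}{r^2}+\frac{\|F_j(D)-F_j(D')\|_2^2}{2r^2},
\]
which is Gaussian with mean at most $s'^2/(2r^2)$ and standard deviation at most $s'/r$. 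A Gaussian tail bound then shows it exceeds $\varepsilon_0$ with probability at most $\delta_0$ under the stated condition on $r$.

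Finally, I would instantiate $\varepsilon_0=\varepsilon/4\le 1/4$ and $\delta_0=\delta/16$. The requirement becomes
\[
r\ge \frac{8\Delta\sqrt A}{\varepsilon}\sqrt{\log\frac{32}{\delta}},
\]
and the prescribed $r$ meets it with equality. Since this holds for every fixed $j$ and every fixed public seed, the claim follows.

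The only delicate point is the validity range of the conservative calibration. It needs $\varepsilon_0\le 1$, which holds here, and $\log(2/\delta_0)$ large enough that the mean shift $s'^2/(2r^2)\le \varepsilon_0^2/(8\log(2/\delta_0))$ is absorbed into $\varepsilon_0/2$. Given $\delta\le 1/10$, so that $\log(32/\delta)\ge\log 320$, this is routine to check.
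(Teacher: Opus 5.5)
Your main argument is correct and is the paper's proof. You take the $\ell_2$ sensitivity $\Delta\sqrt A$ from Lemma~\ref{lem:privacy:query}, use the calibration $r\ge 2s'\sqrt{\log(2/\delta_0)}/\varepsilon_0$, and set $\varepsilon_0=\varepsilon/4$, $\delta_0=\delta/16$, which reproduces the prescribed $r$ with equality. Your remark that this calibration is implied by the Dwork--Roth form, because $4\log(2/\delta_0)>2\log(1.25/\delta_0)$ and $\varepsilon_0\le 1/4<1$, supplies a justification the paper only cites.

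One caution concerns your optional self-contained check: absorbing the mean shift into $\varepsilon_0/2$ loses too much. Write $\tau=\|F_j(D)-F_j(D')\|_2/r\le \varepsilon_0/(2\sqrt{\log(2/\delta_0)})$ for the standard deviation. A remaining threshold of $\varepsilon_0/2$ is then only $\sqrt{\log(2/\delta_0)}$ standard deviations. The resulting Gaussian tail is of order $\sqrt{\delta_0}$, which exceeds $\delta_0$ in this regime ($\delta_0\le 1/160$).

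The check closes if you keep almost the whole budget. The mean equals $\tau^2/2$, so the privacy loss exceeds $\varepsilon_0$ only if a standard normal exceeds
\[
\frac{\varepsilon_0}{\tau}-\frac{\tau}{2}\;\ge\; 2\sqrt{\log(2/\delta_0)}-\frac{\varepsilon_0}{4\sqrt{\log(2/\delta_0)}} .
\]
The square of the right-hand side is at least $4\log(2/\delta_0)-1$, so the tail is at most $\tfrac12 e^{1/2}(\delta_0/2)^2\le\delta_0$.

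The factor $2$ in the calibration is exactly what this step consumes. The slip does not affect your primary route, which relies on the cited Gaussian-mechanism theorem just as the paper does.
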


\begin{proof}
By Lemma~\ref{lem:privacy:query}, $F_j$ has $\ell_2$ sensitivity
at most $s_2=\Delta\sqrt A$. A sufficient Gaussian-mechanism
calibration for $(\varepsilon_0,\delta_0)$-differential privacy is
\[
r\ge
\frac{2s_2}{\varepsilon_0}
\sqrt{\log\frac{2}{\delta_0}},
\qquad 0<\varepsilon_0\le1;
\]
see \cite{dwork2014algorithmic}.
Taking $\varepsilon_0=\varepsilon/4$ and $\delta_0=\delta/16$,
the required noise scale is
\[
\frac{2\Delta\sqrt A}{\varepsilon/4}
\sqrt{\log\frac{2}{\delta/16}}
=
\frac{8\Delta\sqrt A}{\varepsilon}
\sqrt{\log\frac{32}{\delta}}
=r.
\]
The proof is finished.
\end{proof}